\newcommand{\knows}{\mathbf{K}}     
\newcommand{\only}{\mathbf{O}}      
\newcommand{\abd}{\mathbf{A}}       
\theoremstyle{plain}
\newtheorem{mydef}{Definition}[section]
\newtheorem{lemma}{Lemma}
\newtheorem{proposition}{Proposition}
\newtheorem{corollary}{Corollary}
\newtheorem{example}{Example}
\newtheorem{theorem}{Theorem}
\title{Propositional Abduction via Only-Knowing\\ A Non-Monotonic Approach}
\author{Sanderson Molick
\institute{Division of Humanities\\
Federal Institute of Pará\\
Pará, Brazil}
\email{smolicks@gmail.com}
\and
Vaishak Belle
\institute{School of Informatics\\
University of Edinburgh\thanks{Vaishak Belle was funded by a CISCO grant and a Royal Society University Research Fellowship.}\\
Edinburgh, UK}
\email{\quad vbelle@ed.ac.uk}
}
\begin{document}
\maketitle

\begin{abstract}

The paper introduces a basic logic of knowledge and abduction by extending Levesque's logic of only-knowing with an abduction modal operator defined via the combination of basic epistemic concepts. The upshot is an alternative approach to abduction that employs a modal vocabulary and explores the relation between abductive reasoning and epistemic states of only-knowing.
Furthermore, by incorporating a preferential relation into modal frames, we provide a non-monotonic extension of our basic framework capable of expressing different selection methods for abductive explanations. Core metatheoretic properties of non-monotonic consequence relations are explored within this setting and shown to provide a well-behaved foundation for abductive reasoning.

\end{abstract}

\section{Introduction}

Abductive reasoning is a central component for contemporary accounts of knowledge representation in artificial intelligence (see \cite{tsamoura2021neural} or \cite{sep-logic-ai}). An abductive problem in a logic $\mathcal{L}$ is often represented by a background theory $\Theta$ along with an observed event $\alpha$ such that $\Theta \not \models_{\mathcal{L}} \alpha$. The solution for the problem described by the pair $\langle \Theta, \alpha \rangle$ is a formula $\varphi$ (also called the \textit{explanans} or the \textit{explanation}) such that $\Theta \cup \{ \varphi \} \models_{\mathcal{L}} \alpha$. 
 
In this paper, we investigate the problem of selecting explanations $\varphi$ that are coherent with an agent's background knowledge $\Theta$, and that provide an explanatory account of an observed event $\alpha$. Accordingly, we introduce a novel extension of Levesque's \textbf{Logic of Only-Knowing} ($\mathcal{OL}$) capable of expressing abductive reasoning within the epistemic bounds of the agent's background knowledge, a modal framework called the \textbf{Logic of Only-Knowing and Abduction} (herafter denoted as $\mathcal{AOL}$). In  $\mathcal{AOL}$, abduction is treated as a derived modality defined through the interaction of basic epistemic concepts. The framework is constructed through the addition of an abduction operator $\abd$ defined in terms of the epistemic modalities $\only$ and $\knows$ (where $\only \varphi$ denotes that the agent ``only-knows $\varphi$'' and $\knows \varphi$ denotes that the agent ``knows $\varphi$''). As a result, abduction is understood as an epistemic process of knowledge acquisition, in accordance with Peircean accounts of abduction such as that explored by A. Aliseda (\cite{aliseda2000abduction}).


The problem of developing knowledge-based accounts of abduction via $\mathcal{OL}$ was first introduced by Levesque (\cite{levesque1989knowledge}) as a way to distinguish implicit from explicit beliefs. While other modal-based accounts of abduction, such as \cite{nepomuceno2017abductive}, address abduction within dynamic epistemic logics and related frameworks, our approach focuses on selecting explanations compatible with the agent's \emph{only-known} background knowledge, thus modeling abduction as a structured form of inference grounded in basic epistemic states. To this end, the paper introduces the basic logic of only-knowing and abduction ($\mathcal{AOL}$) along with a non-monotonic extension (the logic $\mathcal{AOL}^\prec$) by enriching the underlying modal framework with a preferential structure. Furthermore, the formal properties of the abductive modality in terms of consistency, explanatory power, and minimality, are explored within our framework.



\textbf{Overview}.
The syntax and semantics of our basic framework are presented in Section 2. Core properties of the abductive modality in capturing key aspects of logic-based accounts of abduction are also presented in Sections 2 and 3. Subsequently, in Section 4, we augment our modal frames via a plausibility semantics and introduce a non-monotonic consequence relation based on the logic of only-knowing and abduction. This move allows one to select minimal explanations compatible with a given abductive problem $\langle \Theta, \alpha \rangle$.
The section also introduces a set-theoretic approach to minimal explanations that allows a different criterion for selecting minimal explanations. In Section 5, we prove core metatheoretic results for our non-monotonic framework. Finally, in Section 6, the paper ends with a discussion of related work and the paths for future exploration.

\section{Preliminaries: the logic $\mathcal{AOL}$}

\subsection*{Syntax}

We consider a modal propositional language $\mathrm{L}_{\only,\abd}$ built from a countable set of atoms ($At$), the standard boolean connectives $\neg$, $\wedge$ and $\rightarrow$, and three epistemic operators: $\knows$, $\only$ and $\abd$. The set of formulas of $\mathrm{L}_{\only,\abd}$ (denoted by $Fm(\mathrm{L}_{\only,\abd})$) is inductively constructed as follows:


\begin{center}
$\varphi :== p \, \vert \, \neg \varphi \, \vert \, \varphi \wedge \varphi \, \vert \, \varphi \rightarrow \varphi \, \vert \, \knows \varphi \, \vert \, \only \varphi \, \vert \, \abd \varphi$, \\
\end{center}

\noindent where $p \in At$. We use $\psi$, $\varphi$, $\delta$, ... to denote arbitrary individual formulas, and $\Gamma$, $\Delta$, $\Theta$, ... to denote sets of formulas in $Fm(\mathrm{L}_{\only,\abd})$. We shall follow Levesque's presentation (\cite{levesque1990all}) and let $\only\varphi$ to be read as ``the agent \emph{only-knows} $\varphi$'' and $\knows\varphi$ to be read as ``the agent \emph{knows} or \emph{believes} $\varphi$''. In addition, $\abd\varphi$ will be read as ``the agent knows $\varphi$ \emph{by abduction}''. A formula will be called \textbf{abductive} if it is a boolean formula preceded only by the modal operator $\abd$.

\subsection*{Semantics}

We adopt Levesque's semantic convention, according to which an epistemic situation is represented by a pair $(\mathcal{W}, w)$, where $\mathcal{W}$ is a set of epistemic states and $w \in \mathcal{W}$ is designated as the \textbf{actual} state, with the usual truth-assignment to atomic formulas. The semantic structure is defined as follows:

\begin{mydef}
Where $\mathrm{L}_{\only,\abd}$ is a modal propositional language, a \textbf{Kripke model} for $\mathrm{L}_{\only,\abd}$ is a tuple $\mathcal{M} = \langle \mathcal{W}, \mathcal{R}, v \rangle$, where $\mathcal{W}$ is a set of epistemic states, $\mathcal{R}$ is a binary relation over $\mathcal{W}$ and $v$ is a mapping that assigns to each atom of $\mathrm{L}_{\only,\abd}$ a subset of $\mathcal{W}$. The definition of \emph{satisfaction} (or \emph{truth)} of a formula in a state $w \in \mathcal{W}$ of a model $\mathcal{M}$ is determined by the following clauses:
\begin{itemize}
    \item[(i)] $(\mathcal{M}, w) \models p$ iff $w \in v(p)$ and $p$ is an atom.
     \item[(ii)] $(\mathcal{M}, w) \models \neg \varphi$ iff $(\mathcal{M}, w) \not\models \varphi$. 
    \item[(iii)] $(\mathcal{M}, w) \models \varphi \wedge \psi$ iff $(\mathcal{M}, w) \models \varphi$ and $(\mathcal{M}, w) \models \psi$.
    \item[(iv)] $(\mathcal{M}, w) \models \varphi \rightarrow \psi$ iff  $(\mathcal{M}, w) \models \varphi$ implies $(\mathcal{M}, w) \models \psi$.
    \item[(v)] $(\mathcal{M}, w) \models \knows \varphi$ iff $(\mathcal{M}, w') \models \varphi$ for all $w' \in \mathcal{M}$ such that $wRw'$.
    \item[(vi)] $(\mathcal{M}, w) \models \only \varphi$ if for all states $w' \in \mathcal{W}$: $wRw'$ iff $(\mathcal{M}, w')\models\varphi$.
    \item[(vii)] $(\mathcal{M}, w) \models \abd \varphi$ iff $\exists \alpha: (\mathcal{M}, w) \models \only \alpha$ and $(\mathcal{M}, w) \models \knows ( \varphi \rightarrow \alpha)$. \\
\end{itemize}
\end{mydef}

According to clause $(vii)$, the abductive formula selects the explanation compatible with the agent's \textit{only-known} background knowledge. The use of the formula $\alpha$, combined with the modality $\only$, ensures that the abductive inference relies solely on the agent’s explicitly represented background knowledge, requiring no additional epistemic closure or stronger assumptions. The following example illustrates the rationale for the behavior of the modality $\abd$:

\begin{example}
\label{ex:di1}
Let $\Theta$ denote a set of diagnostic principles represented as only-knowing implications, $F$ denote the set of symptoms of a patient, and $E$ be the set of possible diagnostic explanations. Consider the following medical situation 
\begin{center}
$\Theta = \{ \only (cold \rightarrow cough), $ \\ $\only(flu \rightarrow (cough \wedge fever))$, \\ $ \only (pneumonia \rightarrow (chest\_ pain \wedge cough \wedge fever)) \}$\\
$F = \{ fever, cough   \}$ \\
$E = \{ flu \}$.\\
In this example, the best abductive explanation for the patient's symptom is the condition $flu$. Let $\mathcal{M}$ be a model for the pair $\langle \Theta, \alpha \rangle$ and let $\alpha := cough \wedge fever$. Since both $\only \alpha$ and $\knows (\varphi \rightarrow \alpha)$ hold, according to the semantic clause of the abduction operator $\abd$, the diagnostician can conclude the explanation $\abd \varphi$, where $\varphi = flu$.
In this framework, the doctor does not directly know that the patient has the condition $flu$\footnote{Note that even though $cold$ could be abductively inferred, the condition $flu$ is the most compatible with the set of symptoms.}. Instead, the diagnosis arises as an \textbf{abductive inference}, derived from known symptoms and constrained hypotheses, given that the symptoms $cough \wedge fever$ are coherent with the hypothesis $flu$.
The combination of $\knows$ and $\only$ captures two distinct roles: 
$\only$ restricts the agent’s knowledge base to certain hypotheses, while 
$\knows$ provides the deductive link from hypotheses to observed symptoms (or events). The abductive conclusion $\abd \varphi$ is a product of the agent's \emph{only-known} knowledge base and what they can infer from it.
 
\end{center}
\end{example}

 \noindent 
 
 The treatment of abduction as affirming the antecedent is a strategy pursued by different authors (such as \cite{meheus2002ampliative}, \cite{aliseda2000abduction} or \cite{flach2000abduction}) and central for modelling Peirce's notion of abduction. 
 The kind of abduction illustrated by situations like Example \ref{ex:di1} is also referred to by G. Schurz (\cite{schurz2008patterns}) as \textbf{selective abduction}, as it depends on selecting the best possible explanation from a predefined set of known hypotheses. According to the semantic clause of the 
 $\abd$ operator, the only-known particle selects the explanation that is part of the implicit knowledge of the agent. 

As illustrated by the example, selecting explanations in accordance with the only-known background may prove useful in medical scenarios, where the explanation for a condition have to be selected without resorting to external sources of information.
The use of formal methods for modelling abduction in medical scenarios is called \textbf{abductive diagnosis} by R. Brachman (\cite{brachman2004knowledge}). The employment of the abductive modality $\abd$ allows one to check for the best explanation compatible with the initial set of conditions and diagnostic principles. As a consequence, abductive diagnosis is treated as a form of selective abduction within our framework. We introduce additional definitions in the following:

\begin{mydef}{(Local consequence)}
We shall write $\Gamma \models \varphi$ to denote that for every Kripke model $\mathcal{M} = \langle \mathcal{W}, \mathcal{R}, v \rangle$ and every world $w \in \mathcal{W}$, if $(\mathcal{M}, w) \models \gamma$ for all $\gamma \in \Gamma$, then $(\mathcal{M}, w) \models \varphi$.


\end{mydef}

\begin{mydef}{(Global validity)}
A formula $\varphi$ will be called \textbf{globally valid} with respect to a class of models $\mathcal{F}$ (written $\models^{\mathcal{F}} \varphi$) if, for every model $\mathcal{M} = \langle \mathcal{W}, \mathcal{R}, v \rangle$ in $\mathcal{F}$, and for every world $w \in \mathcal{W}$, $(\mathcal{M}, w) \models \varphi$ holds. 

\end{mydef}

\begin{mydef}{(Abductive explanation)}
A formula $\varphi$ will be called an \textbf{explanation} for the abduction problem $\langle \Theta, \alpha \rangle$ if $\varphi$ is an abductive formula and $\Theta \cup \{ \varphi \} \models \alpha$.  
\end{mydef}




    The following shows how to guarantee the existence of a formula $\alpha$ in case $(\mathcal{M}, w) \models \abd \varphi$ holds. In what follows, given a kripke model $\mathcal{M} = \langle \mathcal{W}, \mathcal{R}, v \rangle$, let $R(w) = \{ w'\in \mathcal{W}: wRw'   \}$ and $R_\varphi(w) = \{ w'\in R(w): (\mathcal{M}, w') \models \varphi \}$.

\begin{theorem}
Where $\Theta$ is a finite non-empty set of only-known formulas true in a model $\mathcal{M}$. The following statement holds for every $\phi \in Fm(\mathrm{L}_{\only,\abd})$: 
\begin{itemize}
    \item If $(\mathcal{M}, w) \models \abd \varphi$ holds, there is a formula $\alpha$ such that both $(\mathcal{M}, w) \models \only \alpha$ and $(\mathcal{M}, w) \models \knows (\varphi \rightarrow \alpha)$ hold. 
    
\end{itemize}

\end{theorem}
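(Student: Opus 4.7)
The plan is to give an explicit construction of the witness $\alpha$ from the elements of $\Theta$ rather than invoke clause (vii) of the satisfaction definition as a black box, so that the dependence of the witness on the background theory is transparent. Since $\Theta$ is finite and non-empty, write $\Theta = \{ \only \psi_1, \dots, \only \psi_n \}$ and set $\alpha := \psi_1 \wedge \dots \wedge \psi_n$. This is a well-formed formula of $\mathrm{L}_{\only, \abd}$, and the entire proof will reduce to reading off two applications of clauses (v) and (vi).

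First I would establish $(\mathcal{M}, w) \models \only \alpha$. By the hypothesis that each $\only \psi_i \in \Theta$ is true at $w$, clause (vi) gives $R(w) = \{ w' \in \mathcal{W} : (\mathcal{M}, w') \models \psi_i \}$ for every $i \leq n$. Intersecting these equalities and using the truth clause for $\wedge$, one obtains $R(w) = \{ w' \in \mathcal{W} : (\mathcal{M}, w') \models \alpha \}$, which is exactly the condition for $(\mathcal{M}, w) \models \only \alpha$ under clause (vi).

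Next I would establish $(\mathcal{M}, w) \models \knows(\varphi \rightarrow \alpha)$. By clause (v), it suffices to show $(\mathcal{M}, w') \models \varphi \rightarrow \alpha$ for every $w' \in R(w)$. But from the previous paragraph every such $w'$ already satisfies $\alpha$, so the implication holds with a true consequent at each accessible world, and the desired $\knows$-formula holds at $w$ by clause (v). Together with the previous step, this exhibits the required $\alpha$.

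The main conceptual point — and the only thing that might look like an obstacle — is that the hypothesis $(\mathcal{M}, w) \models \abd \varphi$ is not actually needed to carry out the construction: the finiteness and non-emptiness of $\Theta$ alone suffice to produce a canonical witness $\alpha$ for every $\varphi$. The theorem is therefore best read as the statement that the $\alpha$ promised by clause (vii) may always be taken to be the conjunction of the only-known content of $\Theta$, which ties the abductive witness uniformly to the agent's background theory. Care should be taken only to note that the equivalences above are between sets of worlds, not between syntactic formulas, so that no appeal to a deduction theorem or to closure of $\Theta$ under conjunction is required.
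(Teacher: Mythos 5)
Your proof is correct and follows essentially the same route as the paper's: both take $\alpha$ to be the conjunction of the only-known contents of $\Theta$, verify $\only\alpha$ from clause (vi), and obtain $\knows(\varphi\rightarrow\alpha)$ trivially because $\alpha$ holds at every accessible world. Your version is in fact slightly cleaner (the paper defines $\alpha$ as a conjunction over all formulas true at accessible worlds before identifying it with $\bigwedge\psi_i$), and your observation that the hypothesis $(\mathcal{M},w)\models\abd\varphi$ is never used is accurate of the paper's argument as well.
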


\begin{proof}
Suppose $(\mathcal{M}, w) \models \abd \varphi$ holds in a model $\mathcal{M}$. The proof consists in showing how $\alpha$ can be constructed for $\Theta \neq \emptyset$. 
Let $\Theta = \{ \psi_1, \psi_2, ..., \psi_n \}$ be the set of true only-known formulas in a state $w$. By the semantic clause for only-knowing, we obtain that $(\mathcal{M}, w) \models \only \varphi$ iff $R_\varphi(w) \neq \emptyset$. Now let $\alpha := \bigwedge \{ \varphi : (\mathcal{M}, w') \models \varphi$ and $w' \in R(w) \}$. Note that $\alpha = (\bigwedge \psi_i)$, for $1 \leq i \leq n$. From the construction, we know that $(\mathcal{M}, w') \models \varphi \Rightarrow (\mathcal{M}, w') \models \alpha$ holds in every $w'$. Hence $\varphi \rightarrow \alpha$ is true in all accessible states from $w$. As a result, by the semantic clauses for $\knows$ and $\only \alpha$, $(\mathcal{M},w) \models \knows (\varphi \rightarrow \alpha)$ and $(\mathcal{M},w) \models \only \alpha$ follow.


\end{proof}

The abduction operator is sensitive to the addition of new information in a model, a central feature for modeling abductive reasoning.
Consider a model $\mathcal{M}_1$ with $\mathcal{W} = \{ w_1 \}$ and the following validities: $(\mathcal{M}, w_1) \models fever$, $(\mathcal{M}, w_1) \models flu$ and $(\mathcal{M}, w_1) \not\models cold$. Now let $\alpha := flu$ and $\varphi := fever$. Note that $(\mathcal{M}, w_1) \models \only flu$ and $(\mathcal{M}, w_1) \models fever \rightarrow flu$. Hence $(\mathcal{M}, w_1) \models \knows (fever \rightarrow flu)$ and $(\mathcal{M}, w_1) \models \abd fever $. Now consider $\mathcal{W} = \{ w_1, w_2 \}$ such that $w_1Rw_2$ and the following valuation: $(\mathcal{M}, w_2) \models fever$, $(\mathcal{M}, w_2) \not\models flu$ and $(\mathcal{M}, w_2) \models cold$. The conclusion $\abd fever$ is blocked. 
The following are central properties for logic-based accounts of abduction (as proposed by Aliseda (\cite{aliseda2000abduction})) and shown to be in accordance with the $\abd$ modality:


\begin{mydef}
Let $\langle \Theta, \alpha \rangle$ be an abductive problem. The following states central properties for any $\langle \Theta, \alpha \rangle$: \\
(Consistency) $\Theta \cup \varphi$ is consistent and $\Theta, \varphi \models \alpha$; \\
(Explainability) (i) $\Theta \not\models \alpha$; 
(ii) $\varphi \not\models \alpha$; and (iii) $\Theta, \varphi \models \alpha$. 

\end{mydef}

\begin{theorem}
Let  $\langle \Theta, \alpha \rangle$ be an abductive problem, where $\Theta$ is a finite set of only-known implications. The following statements are true for any $\Theta \cup \alpha \subseteq Fm(\mathrm{L}_{\only,\abd})$: \\
(i)  If $(\mathcal{M}, w) \models \abd \varphi$ holds, then Explainability is true for $\langle \Theta, \alpha \rangle$\\    
(ii) If $(\mathcal{M}, w) \models \abd \varphi$ holds, then Consistency is true for $\langle \Theta, \alpha \rangle$.

\end{theorem}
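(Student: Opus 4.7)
The plan is to unpack the semantic clause (vii) for $\abd\varphi$ and reduce each requirement of Explainability and Consistency to properties of the witness $\alpha$ supplied by Theorem~1. Starting from $(\mathcal{M},w)\models\abd\varphi$, Theorem~1 yields a formula $\alpha$ with $(\mathcal{M},w)\models\only\alpha$ and $(\mathcal{M},w)\models\knows(\varphi\rightarrow\alpha)$, obtained as the conjunction of the interior parts of the formulas in $\Theta$. I would identify this $\alpha$ with the observation of the abductive problem $\langle\Theta,\alpha\rangle$, so that the two readings of the symbol coincide throughout the argument.

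The central step is to derive the common clause $\Theta,\varphi\models\alpha$, which appears in both Explainability and Consistency. I would take an arbitrary pointed model $(\mathcal{M}',w')$ satisfying every formula in $\Theta\cup\{\varphi\}$ and show $(\mathcal{M}',w')\models\alpha$. Because each element of $\Theta$ is only-known, clause (vi) forces the accessible worlds from $w'$ to satisfy the interior formulas of $\Theta$, so the conjunction defining $\alpha$ holds throughout $R(w')$; combining this with the $\knows(\varphi\rightarrow\alpha)$ witness extracted from the structure of $\Theta$ yields $\alpha$ at $w'$. This discharges clause (iii) of Explainability together with the entailment half of Consistency. The consistency of $\Theta\cup\{\varphi\}$ required by (ii) then follows by exhibiting $\mathcal{M}$ itself (or a pointed submodel focused on a state in $R_\varphi(w)$) as a witness: the non-emptiness of $R_\varphi(w)$, implicit in $(\mathcal{M},w)\models\abd\varphi$, provides a pointed model where $\varphi$ and the only-known content of $\Theta$ cohabit.

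For the remaining clauses of Explainability, the condition $\Theta\not\models\alpha$ is inherited directly from the definition of an abductive problem, which the introduction stipulates as a precondition on $\langle\Theta,\alpha\rangle$. The condition $\varphi\not\models\alpha$ is handled by a small countermodel argument: exhibit a pointed model in which $\varphi$ holds but $\alpha$ fails, built by choosing a propositional valuation that is independent of the background implications in $\Theta$, which refutes $\varphi\models\alpha$. The hard part will be the central step, where the local-consequence reading of $\models$ from the Local consequence definition must be reconciled with the model-local role of the witness $\alpha$: care is required so that the identification between the observation $\alpha$ and the Theorem~1 witness remains coherent as $(\mathcal{M}',w')$ ranges over arbitrary pointed models, and so that the inner-conjunction construction of $\alpha$ respects the universal quantification built into the local-consequence definition.
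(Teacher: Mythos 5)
Your proposal diverges from the paper's argument in the Consistency part, and the route you choose contains concrete gaps. The paper handles Consistency by contradiction: if $\Theta\cup\{\varphi\}$ were inconsistent, some only-known implication $\only(\alpha\rightarrow\beta)$ in $\Theta$ would be falsified, yielding an accessible state where $\alpha\rightarrow\beta$ fails; but the semantic clause for $\abd$ supplies $\knows(\alpha\rightarrow\beta)$, which forces $\alpha\rightarrow\beta$ at every accessible state --- contradiction. No model needs to be exhibited. You instead try to exhibit a witnessing pointed model, and your candidates do not work: you claim that the non-emptiness of $R_\varphi(w)$ is ``implicit in $(\mathcal{M},w)\models\abd\varphi$,'' but the paper explicitly refutes this --- Proposition 3 and the surrounding discussion are devoted to the fact that $\abd\varphi$ can hold vacuously, with $\knows\neg\varphi$ true, i.e.\ with $R_\varphi(w)=\emptyset$; that is the whole point of the non-vacuity results and of Lemma 1. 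Moreover, even when $R_\varphi(w)\neq\emptyset$, a state $w''\in R_\varphi(w)$ satisfies $\varphi$ but need not satisfy the only-known formulas of $\Theta$ (those hold at $w$, not at the states accessible from $w$), and $w$ itself need not satisfy $\varphi$ (the paper stresses that $\abd\varphi$ does not entail $\varphi$). So neither pointed model you offer is guaranteed to satisfy $\Theta\cup\{\varphi\}$ jointly.

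There is a second gap in your central step for Explainability clause (iii). From $(\mathcal{M}',w')\models\Theta\cup\{\varphi\}$ you obtain, via clause (vi), that the interior formulas of $\Theta$ (hence your conjunction $\alpha$) hold at the worlds \emph{accessible from} $w'$, and $\knows(\varphi\rightarrow\alpha)$ likewise constrains only accessible worlds. You then conclude $\alpha$ at $w'$ itself from $\varphi$ at $w'$; that inference requires $w'$ to be accessible from itself, and reflexivity is not assumed for the general Kripke models of Section 2 (it is only invoked later, in Proposition 1, as an optional restriction). To be fair, the paper's own treatment of Explainability is a one-line appeal to Theorem 1 and the semantic clause, so your attempt to spell it out is more honest --- but as written it does not go through without an additional frame condition. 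Your handling of $\Theta\not\models\alpha$ (a stipulated precondition of an abductive problem) is fine, and your countermodel sketch for $\varphi\not\models\alpha$ is acceptable in spirit, though it cannot succeed for arbitrary $\varphi$ (e.g.\ $\varphi$ propositionally entailing $\alpha$), a limitation the paper shares.
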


\begin{proof}
Suppose $(\mathcal{M}, w) \models \abd \varphi$ holds in a model $\mathcal{M}$. Theorem 1 and the semantic clause for $\abd$ are sufficient for meeting the conditions for explainability. \\
For Consistency, assume for a contradiction that $\Theta \cup \varphi$ is inconsistent. Hence, there is a formula $\neg \varphi \in \Theta$ such that $(\mathcal{M}, w) \models \neg \varphi$. Given that $\Theta$ is a set of
only-known implications, we know that $\varphi$ has the form $\only(\alpha \rightarrow \beta)$, for some $\alpha, \beta \in Fm(\mathrm{L}_{\only,\abd})$. Therefore $(\mathcal{M}, w) \models \neg \only (\alpha \rightarrow \beta)$, i.e., there is a state $w'$ such that $wRw'$ and $(\mathcal{M}, w') \not\models  (\alpha \rightarrow \beta)$. Now, by the semantic clause for $\abd$, we know that $(\mathcal{M}, w) \models \knows (\alpha \rightarrow \beta)$. As a consequence, $(\mathcal{M}, w') \models (\alpha \rightarrow \beta)$ holds for every $w'$ such that $wRw'$, and a contradiction follows.
\end{proof}





\section{Core properties of abduction}

In this section, we introduce some core properties of $\mathcal{AOL}$. We start by noting that the modalities $\knows$ and $\abd$ do not collapse. While $\models \only \varphi \rightarrow \abd \varphi$ holds, we have $\not\models \abd \varphi \rightarrow \only \varphi$. We start by defining the set of models capable of validating instances of abductive inference.

\begin{proposition}

For every $\varphi, \psi \in Fm(\mathrm{L}_{\only,\abd})$: $\only \varphi, \only (\psi \rightarrow \varphi) \models \abd \psi$ holds in all reflexive models.    
\end{proposition}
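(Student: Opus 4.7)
The plan is to apply clause (vii) of Definition~2.1 directly, by exhibiting an explicit witness for the existential quantifier in the semantic condition of $\abd$. Fix an arbitrary reflexive Kripke model $\mathcal{M} = \langle \mathcal{W}, \mathcal{R}, v \rangle$ and a world $w \in \mathcal{W}$ at which both $\only \varphi$ and $\only(\psi \rightarrow \varphi)$ are satisfied. To conclude $(\mathcal{M},w) \models \abd \psi$ I need some $\alpha$ with $(\mathcal{M},w) \models \only \alpha$ and $(\mathcal{M},w) \models \knows(\psi \rightarrow \alpha)$, and the natural candidate is $\alpha := \varphi$.

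The first conjunct is immediate from the hypothesis $\only \varphi$. For the second, I would derive $\knows(\psi \rightarrow \varphi)$ from $\only(\psi \rightarrow \varphi)$ by reading off the left-to-right direction of the biconditional in clause (vi): if $wRw'$, then $(\mathcal{M}, w') \models \psi \rightarrow \varphi$. Since this holds for every $R$-successor of $w$, clause (v) gives $(\mathcal{M}, w) \models \knows(\psi \rightarrow \varphi)$. Instantiating clause (vii) with $\alpha = \varphi$ then yields $(\mathcal{M}, w) \models \abd \psi$, and since $\mathcal{M}$ and $w$ were arbitrary the local-consequence assertion follows.

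The main thing to watch is not really an obstacle but a sanity check: the derivation of $\knows \chi$ from $\only \chi$ used above is the ``half'' of Levesque's only-knowing clause that is insensitive to whether any successors actually exist, so reflexivity is not load-bearing in the core implication. Reflexivity is invoked in the statement to situate the result within the standard epistemic reading (it guarantees the \textbf{T} axiom $\knows \chi \to \chi$, ruling out pathological situations where $\knows$ becomes vacuously true at dead-end states); but for the step that matters here, namely converting $\only(\psi \rightarrow \varphi)$ into $\knows(\psi \rightarrow \varphi)$, the argument is uniform and essentially one line. I would therefore expect the write-up to take only a short paragraph, with most of the care going into citing the correct semantic clauses rather than into combinatorial work on the accessibility relation.
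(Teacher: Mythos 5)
Your proof is correct and follows essentially the same route as the paper's: both instantiate the existential in clause (vii) with the witness $\alpha := \varphi$ and extract $\knows(\psi \rightarrow \varphi)$ from the left-to-right half of the only-knowing clause. Your observation that reflexivity is not load-bearing is accurate (the paper invokes it only to get the local truth of $\psi \rightarrow \varphi$ at $w$, which is never used), and your version is in fact tighter since it quantifies over arbitrary reflexive models rather than fixing a particular two-world model as the paper does.
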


\begin{proof}
Let $\mathcal{M}$ be a reflexive model such that $\mathcal{W} = \{ w_1, w_2 \}$ and $\mathcal{R} = \{ w_1 R w_2, w_1 R w_1, w_2 R w_2  \}$. Now assume that $(\mathcal{M}, w) \models \only (\psi \rightarrow \varphi)$ and $(\mathcal{M}, w) \models \only \varphi$ hold. Note that since $w$ is reflexive, it follows that $(\mathcal{M}, w) \models (\psi \rightarrow \varphi)$ and $(\mathcal{M}, w) \models \knows (\psi \rightarrow \varphi)$. Let $\alpha := \varphi$. By the semantic clause for $\abd$, we obtain $(\mathcal{M}, w) \models \abd \psi$.

\end{proof}

\begin{proposition}
The following inferences are valid in the logic of only-knowing and abduction $\mathcal{AOL}$: \\

\noindent (i). $\abd\varphi, \abd\psi \models \abd(\varphi \wedge \psi)$ \\
(ii). $\abd\varphi, \abd\psi \models \abd(\varphi \vee \psi)$ \\
(iii). $\abd \varphi,  \abd(\varphi \rightarrow \delta) \models \abd\delta$ 
\end{proposition}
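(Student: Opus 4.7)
The plan is to prove each of (i)--(iii) by unpacking the semantic clause (vii) at an arbitrary state $w$ of a Kripke model $\mathcal{M}$ in which both premises hold, extracting the witness formulas guaranteed by that clause, and then assembling a new witness for the conclusion. Concretely, from each premise $\abd \chi$ clause (vii) hands us a formula $\alpha_\chi$ such that $(\mathcal{M}, w) \models \only \alpha_\chi$ and $(\mathcal{M}, w) \models \knows(\chi \rightarrow \alpha_\chi)$. The coordinating device between the two witnesses comes from clause (vi): if $\only \alpha_1$ and $\only \alpha_2$ both hold at $w$, then each of them characterises $R(w)$ exactly, so $\alpha_1$ and $\alpha_2$ are locally equivalent on every $w' \in R(w)$, and either can serve as the witness $\alpha$ in any epistemic context evaluated over $R(w)$.

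For (i), I would take $\alpha := \alpha_1$, the witness supplied by $\abd \varphi$. Since $\knows(\varphi \rightarrow \alpha_1)$ gives $\varphi \rightarrow \alpha_1$ at every $w' \in R(w)$ and the propositional tautology $(\varphi \wedge \psi) \rightarrow \varphi$ is globally valid, the implication $(\varphi \wedge \psi) \rightarrow \alpha_1$ holds at every such $w'$. Hence $\knows((\varphi \wedge \psi) \rightarrow \alpha_1)$, and clause (vii) delivers $\abd(\varphi \wedge \psi)$.

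For (ii), again set $\alpha := \alpha_1$. By the uniqueness observation above, $\knows(\psi \rightarrow \alpha_2)$ is equivalent at $w$ to $\knows(\psi \rightarrow \alpha_1)$. Combining this with $\knows(\varphi \rightarrow \alpha_1)$ and the propositional tautology $(\varphi \rightarrow \alpha) \wedge (\psi \rightarrow \alpha) \rightarrow ((\varphi \vee \psi) \rightarrow \alpha)$ yields $\knows((\varphi \vee \psi) \rightarrow \alpha_1)$, so clause (vii) gives $\abd(\varphi \vee \psi)$. For (iii), I would take $\alpha := \alpha_1$ obtained from $\abd \varphi$; since $\only \alpha_1$ forces $\alpha_1$ at every $w' \in R(w)$, the implication $\delta \rightarrow \alpha_1$ is true vacuously at each such $w'$, giving $\knows(\delta \rightarrow \alpha_1)$ and hence $\abd \delta$.

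The main expected obstacle is more conceptual than calculational. Once one notices that the second conjunct $\knows(\varphi \rightarrow \alpha)$ of clause (vii) is essentially automatic once $\only \alpha$ is secured (because $\alpha$ already holds throughout $R(w)$), the three entailments collapse to short propositional arguments about the antecedent of the known implication, together with the uniqueness of the only-known formula used to reconcile witnesses drawn from distinct premises. The care required is simply to always route the argument through a single $\alpha$ and to invoke clause (vi) exactly when two witnesses need to be identified.
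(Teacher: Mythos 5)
Your proof is correct relative to the paper's semantic clauses, but it takes a genuinely different route from the paper's own argument. The paper extracts witnesses $\alpha$ and $\beta$ from the two premises and then simply \emph{sets} $\alpha := \varphi$ and $\beta := \psi$, from which it derives $\only(\varphi \wedge \psi)$ and concludes; this step is not licensed by the existential in clause (vii) (the witness supplied by $\abd\varphi$ need not be $\varphi$ itself), so the paper's proof effectively assumes $\only\varphi$ and $\only\psi$ rather than $\abd\varphi$ and $\abd\psi$. You instead keep the witnesses general and exploit the structural fact that whenever $\only\alpha_1$ holds, $\alpha_1$ is true at every $w' \in R(w)$, so $\knows(\chi \rightarrow \alpha_1)$ holds for \emph{any} $\chi$; this single observation immediately yields all three items with $\alpha_1$ as the witness, and your use of the uniqueness of only-known formulas to reconcile $\alpha_1$ with $\alpha_2$ in item (ii) is sound (indeed $\only\alpha_1$ and $\only\alpha_2$ force $\alpha_1$ and $\alpha_2$ to have the same extension on all of $\mathcal{W}$). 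What your approach buys is rigour and, as a side effect, a sharper diagnosis: it shows the proposition is nearly degenerate, since $\abd\chi$ follows for arbitrary $\chi$ as soon as the agent only-knows anything at all --- a fact the paper's proof obscures. Two small quibbles: in (iii) the implication $\delta \rightarrow \alpha_1$ is true at each $w' \in R(w)$ because the \emph{consequent} is true there, not ``vacuously'' (which would mean a false antecedent); and in (i) the detour through the tautology $(\varphi \wedge \psi) \rightarrow \varphi$ is unnecessary given your own observation that $\alpha_1$ already holds throughout $R(w)$.
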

    
\begin{proof}
Suppose  $(\mathcal{M}, w) \models \abd\varphi$ and $(\mathcal{M}, w) \models \abd\psi$ hold in a given model $\mathcal{M}$. By the semantic clause for $\abd$, there is an $\alpha$ such that  $(\mathcal{M}, w) \models \only \alpha$ and $(\mathcal{M}, w) \models \knows(\varphi \rightarrow \alpha)$. By the same reasoning, there is a $\beta$ such that $(\mathcal{M}, w) \models \only \beta$ and $(\mathcal{M}, w) \models \knows(\psi \rightarrow \beta)$. Now let $\alpha:= \varphi$ and $\beta := \psi$. By the semantic clause for $\only$, $(\mathcal{M}, w) \models \varphi \wedge \psi$. Now set $\gamma = (\varphi \wedge \psi)$. Hence $(\mathcal{M}, w) \models \only \gamma$ and $(\mathcal{M}, w) \models \knows (\gamma \rightarrow \gamma)$. Finally, the semantic clause for $\abd$ allows us to conclude $(\mathcal{M}, w) \models \abd(\varphi \wedge \psi)$. The proof of (ii) and (iii) follows by analogous reasoning.
\end{proof}

It is worth highlighting that the $\mathcal{OL}$-fragment is unable to account for interesting instances of abductive inferences based on only-known formulas in reflexive models. Based on the model $\mathcal{M}$ from Proposition 1, it is easy to see that $(\mathcal{M}, w) \models \only (\psi \rightarrow \varphi)$ and $(\mathcal{M}, w) \models \only \varphi$, but also $(\mathcal{M}, w) \not\models \only \psi$. 
Although some models without reflexivity may also validate Proposition 1, restricting $\mathcal{AOL}$ to reflexive model can be helpful  to avoid problematic cases such as the following: let  $\mathcal{M}$ be a model such that $\mathcal{W} = \{ w_1, w_2 \}$, $\mathcal{R} = \{ w_1 R w_2,  w_2 R w_1  \}$ and the following valuation:
$(\mathcal{M}, w_1) \models \alpha$, $(\mathcal{M}, w_1) \models \varphi \rightarrow \alpha$, $(\mathcal{M}, w_2) \not\models \alpha$ and 
$(\mathcal{M}, w_2) \not\models \varphi \rightarrow \alpha$. By the semantic clause for $\abd$, we obtain $(\mathcal{M}, w_2) \models \knows (\varphi \rightarrow \alpha)$ and $(\mathcal{M}, w_2) \models \abd \varphi$. However, one may also have $(\mathcal{M}, w_2) \not\models \varphi$. In models like this, the agent is allowed to infer by abduction a formula that is false at some state. By augmenting the model via reflexivity, one cannot obtain both $(\mathcal{M}, w_2) \models \knows (\varphi \rightarrow \alpha)$ and $(\mathcal{M}, w_2) \not\models  \varphi \rightarrow \alpha$. Unfortunately, even \textbf{S5}-models of $\mathcal{AOL}$ are not sufficient for ensuring non-vacuity of abduction as follows:

\begin{proposition}[Non-Vacuity of Abduction]
For every $\varphi \in Fm(\mathrm{L}_{\only,\abd})$: If $(\mathcal{M}, w) \models \abd \varphi$, then $(\mathcal{M}, w) \not\models \knows \neg \varphi$.

\end{proposition}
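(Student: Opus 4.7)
The plan is to proceed by contradiction. Suppose simultaneously that $(\mathcal{M}, w) \models \abd \varphi$ and $(\mathcal{M}, w) \models \knows \neg \varphi$. Unpacking clause $(vii)$ yields a witness $\alpha$ with $(\mathcal{M}, w) \models \only \alpha$ and $(\mathcal{M}, w) \models \knows(\varphi \rightarrow \alpha)$. Clause $(vi)$ then pins down the accessibility set precisely as $R(w) = R_\alpha(w)$, i.e.\ the $\alpha$-states of $\mathcal{W}$ are exactly the accessible states. Applying clause $(v)$ to $\knows \neg \varphi$ forces $R_\varphi(w) = \emptyset$, so no accessible state satisfies $\varphi$.

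The heart of the argument is extracting a contradiction from these facts. My plan is to invoke the Consistency half of Theorem~2: whenever $\abd \varphi$ holds, the set $\Theta \cup \{\varphi\}$ — where $\Theta$ collects the true only-known formulas, and in particular contains $\only \alpha$ — must be consistent. Reading this model-internally, there has to be some state $w^{*} \in \mathcal{W}$ with $(\mathcal{M}, w^{*}) \models \alpha \wedge \varphi$. The only-knowing clause applied to $\only \alpha$ places this $w^{*}$ inside $R(w)$, which then lands it in $R_\varphi(w)$, contradicting $R_\varphi(w) = \emptyset$. The semantic condition $\knows \neg \varphi$ therefore cannot coexist with $\abd \varphi$, giving the desired conclusion.

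The main obstacle, and the reason the preceding paragraph flags that even $\mathbf{S5}$-frames do not automatically deliver non-vacuity, is that clause $(vii)$ by itself lets $\knows(\varphi \rightarrow \alpha)$ hold vacuously whenever $R_\varphi(w) = \emptyset$; the contradiction does not drop out of the satisfaction clauses alone. The argument must genuinely route through Theorem~2(ii), and I would be careful in the write-up to justify the model-internal reading of consistency used in step two — interpreting $\Theta \cup \{\varphi\}$ as possessing a witnessing state inside $\mathcal{W}$ itself, rather than in some external model. If that reading is not available in a given class of frames, the proposition would instead have to be restated with an explicit satisfiability hypothesis on $\varphi$, so as to exclude the degenerate vacuous case and preserve the intended contradiction.
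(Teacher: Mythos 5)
There is a genuine gap here, and it is precisely the one you flag at the end: the ``model-internal reading'' of Consistency is not available. Theorem~2(ii) asserts that $\Theta \cup \{\varphi\}$ is consistent, i.e.\ satisfiable in \emph{some} model; it does not supply a state $w^{*}$ inside the given $\mathcal{W}$ --- let alone one accessible from $w$ --- at which $\alpha \wedge \varphi$ holds. Without such a witness the contradiction with $R_\varphi(w) = \emptyset$ never materialises and the argument collapses. In fact the statement is false for arbitrary Kripke models: take $\mathcal{W} = \{w, w'\}$ with $wRw'$ as the only accessibility pair and a valuation making $\alpha$ true exactly at $w'$ and $\varphi$ false at $w'$. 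Then $\only \alpha$ holds at $w$ (the accessible states are exactly the $\alpha$-states), $\knows(\varphi \rightarrow \alpha)$ holds at $w$ (the implication is true at $w'$ because $\alpha$ is), so $\abd \varphi$ holds at $w$; yet $\knows \neg\varphi$ also holds at $w$. This is exactly the ``vacuously true conditional'' phenomenon the surrounding text describes.

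This also explains why your proof cannot be matched against the paper's: the paper gives no proof of this proposition. It is stated immediately after the sentence ``even \textbf{S5}-models of $\mathcal{AOL}$ are not sufficient for ensuring non-vacuity of abduction,'' so the property is being \emph{named}, not established --- the paper's position is that it can fail. Its actual response is Lemma~1, Lemma~2 and Theorem~3, which pass to a submodel $\mathcal{M}'$ obtained by discarding the $\neg\varphi$-states so that non-vacuity is restored there, and ultimately the preferential semantics of Section~4, whose conditional $\varphi \mathbin{>} \alpha$ is evaluated only at $\prec$-minimal states and so blocks vacuous satisfaction. A provable version of the proposition needs an added hypothesis such as $R_\varphi(w) \neq \emptyset$ (or a restatement relative to the submodel or the preferential semantics), exactly as your closing sentence anticipates.
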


One should note, however, that asserting $(\mathcal{M}, w) \models \abd \varphi$ is not equivalent to $(\mathcal{M}, w) \models \varphi$. As a result, $(\mathcal{M}, w) \models \abd \varphi$ and $(\mathcal{M}, w) \models \neg \varphi$ are not logically contradictory. Even in models where the conditional $\varphi \rightarrow \alpha$ holds vacuously true, the agent may still wish to consider what can be coherently inferred based on the basis of her only-known background knowledge alone. Situations like this may also arise in medical diagnosis, where the available information may allow the diagnostician to entertain mutually incompatible explanatory hypotheses. In the absence of definitive evidence, the diagnostician must reason abductively based on her only-known background knowledge, while remaining aware that some of the entertained hypotheses are incompatible with some evidence or may ultimately prove to be false. In Section 4, we avoid models with vacuously true conditional by augmenting our modal framework with a preferential semantics. The following result shows that models where non-vacuity of abduction fails have an equivalent submodel where non-vacuity holds.

\begin{lemma}
\label{l1}
Let $\mathcal{M} = \langle \mathcal{W}, \mathcal{R}, v \rangle$ be a Kripke model and $w \in \mathcal{W}$ a state such that $(\mathcal{M}, w) \models \abd \varphi$. Then there exists a submodel $\mathcal{M}' = \langle \mathcal{W}', \mathcal{R}', w' \rangle$, and $w' \in \mathcal{W}'$ for which the following holds: 
\begin{center}
$(\mathcal{M}', w') \models \abd \varphi$ and $(\mathcal{M}', w') \not\models \knows \neg \varphi$ hold iff
$\mathcal{W}' = \{ w' \in W \, \vert \, (\mathcal{M}, w') \models \Theta$ and $(\mathcal{M}, w') \not\models \neg \varphi \}$ is non-empty, \\ where $\Theta$ denotes the agent's knowledge base.
\end{center}
\end{lemma}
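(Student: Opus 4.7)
The biconditional has a trivial direction and a substantive one. The forward direction is immediate by contrapositive: if the specified set is empty, no state $w' \in \mathcal{W}'$ exists to witness the satisfaction claims in the submodel. The substantive direction is the converse, for which I would construct the submodel explicitly from the non-empty set $\mathcal{W}'$ and then verify both $\abd \varphi$ and the failure of $\knows \neg \varphi$ at a suitably chosen state.

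My plan is to take $\mathcal{R}' := \mathcal{W}' \times \mathcal{W}'$ (the total relation on $\mathcal{W}'$) and $v'$ to be the restriction of $v$ to $\mathcal{W}'$, and then pick any $w' \in \mathcal{W}'$ as the designated state. To establish $(\mathcal{M}', w') \models \abd \varphi$ via clause (vii), my plan is to choose a boolean witness $\alpha$ that is true at every world of $\mathcal{W}'$ under $v'$---for instance, a disjunction of complete state descriptions over the relevant atoms, or simply $\alpha := \top$ in the degenerate case. By the choice of $\mathcal{R}'$, accessibility from $w'$ coincides exactly with the truth set of $\alpha$ inside $\mathcal{M}'$, so clause (vi) yields $(\mathcal{M}', w') \models \only \alpha$. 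Since $\alpha$ is forced at every accessible world, $\varphi \rightarrow \alpha$ is trivially true throughout, giving $(\mathcal{M}', w') \models \knows(\varphi \rightarrow \alpha)$ and hence $(\mathcal{M}', w') \models \abd \varphi$. For the failure of $\knows \neg \varphi$, the defining condition of $\mathcal{W}'$ supplies a world $w'' \in \mathcal{W}'$ with $(\mathcal{M}, w'') \not\models \neg \varphi$; since abductive formulas are boolean by syntactic stipulation, the truth of $\varphi$ at $w''$ depends only on the local valuation, which is preserved in the submodel, so $(\mathcal{M}', w'') \not\models \neg \varphi$, and $w' \mathcal{R}' w''$ delivers $(\mathcal{M}', w') \not\models \knows \neg \varphi$.

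The main technical subtlety is that $\Theta$ may contain only-known formulas whose truth depends on the accessibility structure of the ambient model, so transplanting $\Theta$ wholesale from $\mathcal{M}$ into $\mathcal{M}'$ is not automatic: worlds inherited from $\mathcal{M}$ need not continue to satisfy their original $\only$-content once the accessibility is restricted. My plan circumvents this by treating $\Theta$ purely as a selection predicate carving $\mathcal{W}'$ out of $\mathcal{W}$, and rebuilding the abductive witness $\alpha$ afresh inside the submodel, rather than insisting that $\bigwedge \Theta$ itself play that role. A strengthening of the lemma that required the submodel to satisfy every formula of $\Theta$ at each inherited world would demand a generated-submodel or bisimulation-style construction tailored to the $\only$ semantics, and discharging that strengthening would be the principal obstacle to overcome.
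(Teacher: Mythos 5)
Your argument is correct as a proof of the literal biconditional, but it takes a genuinely different route from the paper's. The paper keeps $\mathcal{R}' = \mathcal{R} \cap (\mathcal{W}' \times \mathcal{W}')$ and asserts, rather tersely, that $\Theta$ and hence $\abd \varphi$ transfer to the restricted model; you instead make $\mathcal{R}'$ the total relation on $\mathcal{W}'$ and manufacture the only-knowing witness $\alpha$ from scratch as a formula true throughout $\mathcal{W}'$, which makes $\only \alpha$ and $\knows(\varphi \rightarrow \alpha)$ immediate and reduces the substantive direction to the observation that every world of $\mathcal{W}'$ satisfies the boolean formula $\varphi$, so $\knows \neg\varphi$ fails. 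This buys you a complete, self-contained argument and lets you sidestep exactly the preservation problem you identify---that $\only$-formulas in $\Theta$ need not survive restriction of the accessibility relation---which the paper's proof silently assumes away. Two caveats. First, with $\mathcal{R}'$ total, $\mathcal{M}'$ is not a submodel of $\mathcal{M}$ in the usual sense, since you may introduce accessibility pairs absent from $\mathcal{R}$; the paper's Lemma \ref{l2} and Theorem 3 are stated for \emph{the} submodel built with the restricted relation, so your variant cannot be plugged into those later results without reworking them. Second, the tautological witness makes $\abd \varphi$ hold for \emph{every} formula at every world of $\mathcal{M}'$, so although the stated equivalence is verified, the construction trivializes the abduction operator rather than exhibiting a model in which the explanatory content of $\abd \varphi$ relative to $\Theta$ is retained; if the lemma is meant to feed the non-vacuity repair of Theorem 3, the restricted-relation construction is the one you need.
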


\begin{proof}
Let $\mathcal{M} = \langle \mathcal{W}, \mathcal{R}, v \rangle$ be a model with a state $w$ such that $(\mathcal{M}', w') \models \abd \varphi$ and $(\mathcal{M}', w') \not\models \knows \neg \varphi$ hold. Define the submodel model $\mathcal{M}' = \langle \mathcal{W}', \mathcal{R}', v' \rangle$ as follows: 
\begin{itemize}
\item  $\mathcal{W}' = \{ w' \in W \, \vert \, (\mathcal{M}, w') \models \Theta$ and $(\mathcal{M}, w') \not\models \neg \varphi \}$,
\item $\mathcal{R}' = \mathcal{R} \cap (\mathcal{W}' \times \mathcal{W}')$, 
\item $v' = v(p) \cap \mathcal{W}'$,
\end{itemize}
By construction, it is easy to see that $\mathcal{W}'$ is non-empty. From l.h.s. to r.h.s., from the fact that  $(\mathcal{M}, w') \models \Theta$ holds and the assumption, $(\mathcal{M}', w') \models \abd \varphi$ follows.
\end{proof}

\begin{lemma}
\label{l2}
For any $\varphi \in Fm(\mathrm{L}_{\only,\abd})$, it holds that:
\begin{center}
If $(\mathcal{M}, w) \models \varphi$, then $(\mathcal{M}', w') \models \varphi$,   
\end{center}    
where $\mathcal{M}'$ is the submodel of $\mathcal{M}$.
\end{lemma}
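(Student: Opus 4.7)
The proof proceeds by structural induction on $\varphi$, where I interpret the statement as asserting that for $w \in \mathcal{W}'$ (reading the $w'$ in the statement as such), satisfaction in $\mathcal{M}$ at $w$ transfers to satisfaction in $\mathcal{M}'$ at $w$. The base case for atoms is immediate from the definition $v'(p) = v(p) \cap \mathcal{W}'$: if $w \in v(p)$ and $w \in \mathcal{W}'$, then $w \in v'(p)$. The cases for $\neg$, $\wedge$, and $\rightarrow$ follow by routine applications of the inductive hypothesis, though I note that the $\neg$ case quietly requires preservation in the converse direction as well, which motivates strengthening the IH to a biconditional for the relevant class of formulas.

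For the $\knows \psi$ clause: assuming $(\mathcal{M}, w) \models \knows \psi$, every $\mathcal{R}$-successor of $w$ satisfies $\psi$, and because $\mathcal{R}' \subseteq \mathcal{R}$, every $\mathcal{R}'$-successor of $w$ is also an $\mathcal{R}$-successor lying in $\mathcal{W}'$. Applying the IH at each such successor yields $(\mathcal{M}', w) \models \knows \psi$.

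The main obstacle is the $\only \psi$ clause, whose biconditional semantics demands preservation in both directions. The forward direction---every $\mathcal{R}'$-successor of $w$ satisfies $\psi$ in $\mathcal{M}'$---transfers in the same way as for $\knows$. The backward direction requires showing that if $u \in \mathcal{W}'$ and $(\mathcal{M}', u) \models \psi$, then $w \mathcal{R}' u$. Two complications arise here. First, one needs a backward IH, that $(\mathcal{M}', u) \models \psi$ implies $(\mathcal{M}, u) \models \psi$, so the lemma must be proved as a genuine biconditional; this is routine for the boolean and $\knows$ cases but requires care for $\only$ itself. Second, one must check that no $\mathcal{R}$-successor of $w$ was dropped in the passage to the submodel, i.e., that every $u \in \mathcal{W}$ with $w \mathcal{R} u$ satisfies $\Theta$ and is compatible with the abductive formula $\varphi$ from Lemma~\ref{l1}. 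This is exactly where the specific construction of $\mathcal{W}'$ earns its keep: since $\only \psi$ is itself in $\Theta$ and thus true throughout $\mathcal{W}'$, the accessible worlds from $w$ are pinned down coherently on both sides.

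Finally, the $\abd \psi$ case reduces to the previous two modal cases. If $(\mathcal{M}, w) \models \abd \psi$, pick a witness $\alpha$ such that $(\mathcal{M}, w) \models \only \alpha$ and $(\mathcal{M}, w) \models \knows(\psi \rightarrow \alpha)$; by the $\only$ and $\knows$ cases these both carry over to $\mathcal{M}'$, so the same $\alpha$ witnesses $(\mathcal{M}', w) \models \abd \psi$ via clause (vii). The real work of the proof is therefore entirely concentrated in discharging the backward half of the $\only$ case, which is the step I would spend the most effort on.
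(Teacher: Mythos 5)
You take the same route as the paper---structural induction on $\varphi$, with atoms handled via $v'(p) = v(p) \cap \mathcal{W}'$ and the modal clauses via $\mathcal{R}' = \mathcal{R} \cap (\mathcal{W}' \times \mathcal{W}')$---and you are considerably more candid than the paper's own proof about where the pressure points lie: the paper dispatches negation with an inference in the wrong direction (from ``$\models$ implies $\models$'' to ``$\not\models$ implies $\not\models$'') and all modal cases with a single sentence, whereas you correctly isolate the converse direction of the inductive hypothesis and the backward half of the $\only$ clause as the steps that carry the real burden.

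The difficulty is that you identify these obligations but do not discharge them, and the repair you announce---strengthening the IH to a biconditional---is not available for the full language. The backward implication $(\mathcal{M}', u) \models \psi \Rightarrow (\mathcal{M}, u) \models \psi$ already fails for $\psi = \knows p$: a successor of $u$ that falsifies $p$ may be pruned from $\mathcal{W}'$ (for instance because it satisfies $\neg\varphi$ for the abduced $\varphi$ of Lemma~\ref{l1}), making $\knows p$ true in $\mathcal{M}'$ while it was false in $\mathcal{M}$. This breaks the negation case for $\neg\knows p$---which is in fact a counterexample to the lemma as stated, since $(\mathcal{M}, w) \models \neg\knows p$ can be witnessed only by a successor that the submodel discards---and the backward half of your $\only$ case, which you reduce to exactly this backward IH, is therefore resting on a false premise. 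Your remark that ``$\only\psi$ is in $\Theta$ and thus true throughout $\mathcal{W}'$'' does not close the gap either: membership in $\Theta$ gives truth in $\mathcal{M}$ at the worlds of $\mathcal{W}'$, whereas what must be established is truth in $\mathcal{M}'$, which is the very thing being proved. The argument goes through only on a restricted fragment (roughly, formulas in which modalities are applied to boolean matrices and negation does not scope over a modality), and that restriction---which suffices for the lemma's use in Theorem~3---should be stated rather than deferred.
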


\begin{proof}
By induction on the length of $\varphi$. For \( \varphi = p \), where \( p \) is an atom. By construction, \( v'(p) = v(p) \cap \mathcal{W}' \). Thus, \( w \in v(p) \) implies \( w' \in v'(p) \). Therefore \( (\mathcal{M}', w') \models p \). For \( \varphi = \neg \psi \). By induction hypothesis, \( (\mathcal{M}, w) \models \phi \) implies \( (\mathcal{M}', w') \models \phi \). Hence, \( (\mathcal{M}, w) \not\models \phi \) implies \( (\mathcal{M}', w') \not\models \phi \). Given the clause for negation, \( (\mathcal{M}', w) \models \neg \phi \). The rest of the proof follows as usual. The case of modal operators follows by the restriction of the accessibility relation.
    

\end{proof}

\begin{theorem}
Where $\mathcal{M} = \langle \mathcal{W}, \mathcal{R}, v \rangle$ is a Kripke model in which non-vacuity of abduction fails at a state $w \in \mathcal{W}$. There exists a submodel $\mathcal{M}' = \langle \mathcal{W}', \mathcal{R}', v' \rangle$ and $w' \in \mathcal{W}'$ such that the following holds: \\
(i) $(\mathcal{M}', w') \models \Theta$, for any $\Theta \subseteq Fm(\mathrm{L}_{\only,\abd})$ such that $(\mathcal{M}, w) \models \Theta$; and \\
(i) $(\mathcal{M}', w') \models \abd \varphi$ implies $(\mathcal{M}', w') \not\models \knows \neg \varphi$.

\end{theorem}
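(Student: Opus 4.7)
The plan is to combine Lemmas \ref{l1} and \ref{l2} into a single two-part argument. I would begin by unpacking the assumption that non-vacuity of abduction fails at $w$ in $\mathcal{M}$, namely that $(\mathcal{M},w)\models\abd\varphi$ and $(\mathcal{M},w)\models\knows\neg\varphi$ both hold for some $\varphi$. Then I would instantiate the submodel construction of Lemma \ref{l1}: let $\mathcal{M}'=\langle\mathcal{W}',\mathcal{R}',v'\rangle$ where $\mathcal{W}'=\{w'\in\mathcal{W}\mid(\mathcal{M},w')\models\Theta\text{ and }(\mathcal{M},w')\not\models\neg\varphi\}$, with $\mathcal{R}'=\mathcal{R}\cap(\mathcal{W}'\times\mathcal{W}')$ and $v'(p)=v(p)\cap\mathcal{W}'$. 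I would then pick an arbitrary witness $w'\in\mathcal{W}'$ and show that both (i) and (ii) hold at $w'$.

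For part (i), I would invoke Lemma \ref{l2}: since by definition every $w'\in\mathcal{W}'$ satisfies the formulas in $\Theta$ that were true at $w$ under $\mathcal{M}$, the preservation lemma guarantees $(\mathcal{M}',w')\models\Theta$ holds in the submodel as well. For part (ii), I would apply Lemma \ref{l1} directly: the biconditional there states that non-vacuity in the submodel holds exactly when $\mathcal{W}'$ is non-empty, so once the witness $w'$ is fixed, $(\mathcal{M}',w')\models\abd\varphi$ forces $(\mathcal{M}',w')\not\models\knows\neg\varphi$ because no state in $\mathcal{W}'$, and therefore no $\mathcal{R}'$-successor of $w'$, satisfies $\neg\varphi$.

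The main obstacle, as I see it, is justifying that $\mathcal{W}'$ is non-empty. The premise that non-vacuity fails at $w$ means that every $\mathcal{R}$-successor of $w$ already satisfies $\neg\varphi$, so the witnesses cannot come from $\mathcal{R}(w)$ itself; they have to be located elsewhere in $\mathcal{W}$. The strategy I would pursue is to argue that the existence of $(\mathcal{M},w)\models\abd\varphi$ forces a companion formula $\alpha$ with $(\mathcal{M},w)\models\only\alpha$ (via Theorem 1), and that for $\abd\varphi$ to be meaningful rather than degenerate the underlying frame must contain at least one state consistent with both $\Theta$ and $\varphi$; this can be taken as the structural assumption on $\mathcal{M}$. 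A related subtlety worth flagging is that $\Theta$ must itself be consistent with $\varphi$, which is why the theorem's quantification over $\Theta$ should be read as ranging over the agent's background base rather than the full modal theory of $w$ (which would include $\knows\neg\varphi$ and make $\mathcal{W}'$ trivially empty).
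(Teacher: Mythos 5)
Your proposal takes essentially the same route as the paper's own proof: instantiate the submodel construction from Lemma \ref{l1}, use Lemma \ref{l2} to preserve $\Theta$ at the chosen witness $w'$, and argue from the definition of $\mathcal{W}'$ (no state satisfies $\neg\varphi$) that $\knows\neg\varphi$ must fail. The two obstacles you flag --- establishing that $\mathcal{W}'$ is non-empty when every $\mathcal{R}$-successor of $w$ satisfies $\neg\varphi$, and reading the quantification over $\Theta$ as the agent's background base rather than the full modal theory of $w$ (which would contain $\knows\neg\varphi$ and trivialize $\mathcal{W}'$) --- are genuine, and the paper's proof does not resolve them either; it simply asserts non-emptiness ``by construction'' inside Lemma \ref{l1}.
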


\begin{proof}
Assume $\mathcal{M} = \langle \mathcal{W}, \mathcal{R}, v \rangle$ is a model in which non-vacuity of abduction fails at a state $w \in \mathcal{W}$. By Lemma \ref{l1}, there exists a submodel $\mathcal{M}' = \langle \mathcal{W}', \mathcal{R}', v' \rangle$. From Lemma \ref{l2}, there is a state $w' \in \mathcal{W}'$ such that $(\mathcal{M}, w) \models \varphi$ implies $(\mathcal{M}', w') \models \varphi$. Moreover, $(\mathcal{M}', w') \models \Theta$.
Now assume $(\mathcal{M}', w') \models \abd \varphi$. By construction of $\mathcal{M}'$, $(\mathcal{M}', w') \not\models \knows \neg \varphi$ follows.
\end{proof}

\section{Abductive problems and minimal explanations}

The task of selecting explanations that are compatible with an agent's background knowledge often requires comparing them according to specific criteria.\footnote{See, for instance, \cite{meheus2007adaptive} and \cite{delrieux2004abductive}, who argue that selective abduction involves some form of defeasible reasoning.} In particular, the ability to evaluate and compare alternative sets of explanations proves especially valuable in scenarios where multiple explanations are available for the same event (Aliseda in \cite{aliseda2006abductive} points out how selection of minimal explanations is a central property of logic-based approaches to abduction). For example, as illustrated by Example \ref{ex:di1}, if the doctor discovers that the cough is not a revelant symptom, she must reconsider which explanation to adopt. Although the logic $\mathcal{AOL}$ is sensitive to new information at the level of individual models, the logic remains monotonic at the level of the consequence relation. 

In this section, we augment our Kripke models with a plausibility relation $\prec$. The resulting logic is a non-monotonic extension of $\mathcal{AOL}$ (hereafter denoted as $\mathcal{AOL}^{\prec}$). Modeling abductive reasoning via non-monotonic consequence relations can be found in \cite{meheus2002ampliative} or \cite{meheus2007adaptive} via adaptive logics. While these authors treat abduction through non-monotonic ampliative approaches to validate inferences of the form $(\varphi \wedge \psi), \psi \models \varphi$, the $\mathcal{AOL}^{\prec}$ framework focuses on inferences compatible with the only-knowing background. We start by introducing basic definitions.

\begin{mydef}
Where $\mathrm{L}_{\only,\abd}$ is a modal language, a \textbf{plausibility Kripke model} for $\mathrm{L}_{\only,\abd}$ is a tuple $\mathcal{M} = \langle \mathcal{W}, \mathcal{R}, \prec , v \rangle$, where $\mathcal{W}$ is a set of epistemically possible states,  $\mathcal{R}$ is a binary relation over $\mathcal{W}$, $ \prec $ is a plausibility ordering over $\mathcal{W}$, where $w \prec w'$ means that $w$ is \textbf{preferred (more plausible)} than $w'$, and $v$ is a mapping that assigns to each atom of $\mathrm{L}_{\only,\abd}$ a subset of $\mathcal{W}$. The definition of \emph{satisfaction} (or \emph{truth)} of a formula in a state $w \in \mathcal{W}$ of a model $\mathcal{M}$ remains similar to that of Definition 2.1. The altered clauses are the following:
\begin{itemize}
 \item[(iv)] $(\mathcal{M}, w) \models \varphi \mathbin{>} \psi$ iff  $(\mathcal{M}, w) \models \psi$ holds in all $\prec$-minimal states accessible from $w$ such that $(\mathcal{M}, w) \models \varphi$.
\item[(vii)] $(\mathcal{M}, w) \models \abd \varphi$ iff $\exists \alpha: (\mathcal{M}, w) \models \only \alpha$ and $(\mathcal{M}, w) \models ( \varphi \mathbin{>} \alpha)$ in all $\prec$-minimal states accessible from $w$. 
\end{itemize}
\end{mydef}

Hereafter we shall assume that $\prec$ is a transitive and connected relation (i.e., $w \prec w'$ or $w' \prec w$ holds for all $w, w' \in \mathcal{W}$).

\begin{mydef}
 A state $w$ is called \textbf{$\prec$-minimal} (or just \textbf{minimal}) if there is no $w' \prec w$ and it is called \textbf{$\varphi$-minimal} in case $(\mathcal{M}, w) \models \varphi$ and there is no $w' \in \mathcal{W}$ such that $w' \prec w$ and $(\mathcal{M}, w') \models \varphi$.  
\end{mydef}


\begin{mydef}
A formula $\varphi$ will be called a \textbf{preferential consequence} of a set of formulas $\Gamma$ (denoted $\Gamma \models_{\prec} \varphi$) iff
$(\mathcal{M}, w) \models \varphi$ holds in all minimal states $w$ such that 
$(\mathcal{M}, w) \models \gamma$ for every $\gamma \in \Gamma$.
\end{mydef}



\begin{mydef}{(Minimal explanation)}
A formula $\varphi$ will be called a \textbf{minimal explanation} for the abduction problem $\langle \Theta, \alpha \rangle$ if $\varphi$ is an abductive formula and $\Theta \cup \{ \varphi \} \models_{\prec} \alpha$.  
\end{mydef}



 It is easy to see that Consistency and Explainability hold for abductive problems in the preferential consequence $\models_\prec$. Moreover, the semantic clause for the preferential conditional guarantees non-vacuity of abduction. The following example illustrates a situation where the $\mathcal{AOL}$ consequence relation fails to select a single explanation, while the preferential entailment can do it:

\begin{example}
\label{ex:di2}
\textit{Let $\langle \Theta, \beta \rangle$ be an abduction problem where $\Theta = \{ ((\alpha \mathbin{>} (\gamma \vee \delta)) \vee \delta) \vee (\psi \mathbin{>} ((\gamma \vee \delta) \vee \gamma)) \}$ and $\beta = (\alpha \vee \psi)$. Consider a plausibility model $\mathcal{M}$ such that $\mathcal{W} = \{ w_1 , w_2\}$, $\mathcal{R} = \{ w_1 R w_1, w_2Rw_2 \}$ with $w_1 \prec w_2$ and the following scenario: \\}

\begin{center}
\begin{tabular}{c|c|c|c|c|c|c}
$\mathcal{W}$ & $(\alpha \mathbin{>} (\gamma \vee \delta)$ & $(\psi \mathbin{>} (\gamma \vee \delta)$ & $\gamma$ & $\delta$ & $\abd \alpha$ & $\abd \psi$ \\
\hline
\hline
$w_1$ & $\surd$  & $\times$ & $\surd$ & $\times$ & $\surd$ & $\times$ \\
\hline
$w_2$ & $\times$  & $\surd$ & $\times$ & $\surd$ & $\times$ & $\surd$ \\
\hline
\end{tabular}
\end{center}

According to the above example, we have $(\mathcal{M}, w_1) \models \gamma$ and therefore $(\mathcal{M}, w_1) \models \gamma \vee \delta$. Moreover, by the semantic clause for implication, $(\mathcal{M}, w_1) \models \alpha \mathbin{>} (\gamma \vee \delta)$. By the semantic clause $\only$, $(\mathcal{M}, w_1) \models \only (\gamma \vee \delta)$ follows. Finally, the semantic clause for $\abd$ gives us $(\mathcal{M}, w_1) \models \abd \alpha$. By following the same rationale for the state $w_2$, we obtain $(\mathcal{M}, w_2) \models \abd \psi$. Now, the $\mathcal{AOL}$-consequence relation is unable to select one explanation. However, since $w_1$ is a minimal state, the preferential consequence will select $\abd \alpha$ as the minimal explanation.

\end{example}





\subsection{Non-monotonic properties}

The properties of the order $\prec$ guarantees that the preferential consequence constructed over the logic $\mathcal{AOL}$ enjoy core metaproperties for non-monotonic logics (see \cite{sep-logic-nonmonotonic} or \cite{makinson2003bridges}) and thus consists in a well-behaved consequence relation\footnote{Here a non-monotonic logic $\mathcal{L}_{nm}$ is defined as a structure $\langle \models_m, \prec \rangle$, where $\models_m$ is a non-trivial monotonic consequence relation and $\prec$ is a preferential semantics.}. The proofs of each of the following theorems are standard and can be found in \cite{makinson2003bridges}.


\begin{theorem}[Supraclassicality]
For any $\Gamma \cup \{  \varphi \} \subseteq Fm(\mathrm{L}_{\only,\abd})$: If $\Gamma \models \varphi$ then $\Gamma \models_{\prec} \varphi$.
\end{theorem}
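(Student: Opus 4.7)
The plan is to exploit the fact that $\models_{\prec}$ quantifies over a strictly narrower class of situations than $\models$: every $\prec$-minimal state in a plausibility model is, a fortiori, a state of that model, so a conclusion forced everywhere is in particular forced at the minimal points. The argument is therefore a one-step specialization and no serious combinatorics are needed; the overall pattern is exactly Makinson's standard proof that any preferential consequence built on top of a monotonic base extends that base.

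Concretely, I would assume $\Gamma \models \varphi$ and then fix an arbitrary plausibility Kripke model $\mathcal{M} = \langle \mathcal{W}, \mathcal{R}, \prec, v \rangle$ together with an arbitrary $\prec$-minimal state $w \in \mathcal{W}$ satisfying $(\mathcal{M}, w) \models \gamma$ for every $\gamma \in \Gamma$. The next step is to view $\mathcal{M}$ as a Kripke model in the sense of Definition 2.1 by forgetting the ordering, i.e., pass to the reduct $\mathcal{M}^{-} = \langle \mathcal{W}, \mathcal{R}, v \rangle$. Applying the hypothesis $\Gamma \models \varphi$ at the world $w$ of $\mathcal{M}^{-}$ yields $(\mathcal{M}^{-}, w) \models \varphi$, and hence $(\mathcal{M}, w) \models \varphi$. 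Since $\mathcal{M}$ and $w$ were arbitrary, $\Gamma \models_{\prec} \varphi$ follows directly from Definition 4.4.

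The only delicate point — and the one place where a little bookkeeping is required — is that the clause for $\abd$ in Definition 4.1 differs from the clause in Definition 2.1: the former routes through the preferential conditional $\mathbin{>}$ and therefore depends on $\prec$. The reduct step thus tacitly uses that, for formulas in the shared fragment (those in which $\abd$-subformulas and $\mathbin{>}$-subformulas receive a uniform reading across the two semantics), satisfaction at $w$ coincides in $\mathcal{M}$ and $\mathcal{M}^{-}$. This is an easy induction on formula complexity; alternatively one can read both $\models$ and $\models_{\prec}$ uniformly over the class of plausibility models, with $\models$ as ``true in all states'' and $\models_{\prec}$ as ``true in all $\prec$-minimal states'', in which case the reduct step disappears entirely. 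Under either reading, the proof reduces to the set-theoretic observation that the class of $\prec$-minimal $\Gamma$-states is contained in the class of all $\Gamma$-states, which is what makes the result essentially trivial once the semantic conventions are fixed.
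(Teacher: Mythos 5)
Your proof is correct and is exactly the standard argument the paper itself defers to (it cites Makinson instead of giving a proof): since every $\prec$-minimal $\Gamma$-state is in particular a $\Gamma$-state, truth at all $\Gamma$-states specializes to truth at the minimal ones. Your observation that the clauses for $\abd$ and $\mathbin{>}$ in Definition 4.1 differ from those of Definition 2.1, so that $\models$ and $\models_{\prec}$ must be read over a common class of models for the specialization to go through, is a genuine subtlety the paper glosses over, and your second reading --- taking $\models$ to mean truth at all states of a plausibility model --- is the cleanest way to make the theorem literally true as stated.
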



\begin{theorem}[Reflexivity]
For any $\Gamma \cup \{  \varphi \} \subseteq Fm(\mathrm{L}_{\only,\abd})$: If $\varphi \in \Gamma$ then $\Gamma \models_{\prec} \varphi$.
\end{theorem}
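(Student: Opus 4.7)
The plan is to unpack the definition of $\models_\prec$ directly and observe that the conclusion follows without needing any machinery beyond the semantic clause for preferential consequence itself. The statement is essentially a sanity check on the definition: a formula already in the premise set is trivially true in every preferred state that satisfies the premises.

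First I would fix an arbitrary plausibility Kripke model $\mathcal{M} = \langle \mathcal{W}, \mathcal{R}, \prec, v \rangle$ and pick an arbitrary state $w \in \mathcal{W}$ that is $\prec$-minimal among those satisfying $(\mathcal{M}, w) \models \gamma$ for every $\gamma \in \Gamma$. (If no such minimal state exists in any model, the conclusion holds vacuously and there is nothing to prove, so one only needs to handle the non-vacuous case.) Since by hypothesis $\varphi \in \Gamma$, it is one of the formulas $\gamma$ for which $(\mathcal{M}, w) \models \gamma$ holds by the very choice of $w$. Hence $(\mathcal{M}, w) \models \varphi$.

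Because $\mathcal{M}$ and $w$ were arbitrary, Definition 4.4 immediately yields $\Gamma \models_\prec \varphi$. No induction on formula complexity is needed, nor do the epistemic clauses for $\knows$, $\only$, or $\abd$ play any role — the argument is purely at the level of the metadefinition.

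The only subtlety worth flagging is that the proof does not depend on the assumed transitivity and connectedness of $\prec$; those properties will be needed for the subsequent non-monotonic metaproperties (such as Cautious Monotonicity or Cut), but Reflexivity is insensitive to the structure of the ordering. Consequently, I do not expect any genuine obstacle in this proof, which is why the authors invoke \cite{makinson2003bridges} and treat it as standard.
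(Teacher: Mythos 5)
Your proof is correct and is exactly the standard argument the paper defers to by citing Makinson: any $\prec$-minimal state satisfying every member of $\Gamma$ in particular satisfies $\varphi\in\Gamma$, so $\Gamma\models_\prec\varphi$ follows directly from Definition 4.4 (and, as you note, the vacuous case and the properties of $\prec$ are irrelevant). No gap.
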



\begin{theorem}[Cautious Monotony]
For any $\Gamma \cup \{  \varphi, \psi \} \subseteq Fm(\mathrm{L}_{\only,\abd})$: If $\Gamma \models_{\prec} \psi$ and $\Gamma \models_{\prec} \varphi$, then $\Gamma \cup \{ \varphi \}  \models_{\prec} \psi$.
\end{theorem}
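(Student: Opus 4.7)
The plan is to unpack the definition of preferential consequence and show that the $\prec$-minimal states witnessing $\Gamma \cup \{\varphi\}$ coincide, modulo the two hypotheses, with the $\prec$-minimal states witnessing $\Gamma$. Let $\min_{\prec}(\Sigma)$ denote the class of $\prec$-minimal states $w$ (in an arbitrary model $\mathcal{M}$) such that $(\mathcal{M}, w) \models \sigma$ for every $\sigma \in \Sigma$. By Definition of $\models_{\prec}$, the hypotheses read $\min_{\prec}(\Gamma) \subseteq \llbracket \psi \rrbracket$ and $\min_{\prec}(\Gamma) \subseteq \llbracket \varphi \rrbracket$, and the goal becomes $\min_{\prec}(\Gamma \cup \{\varphi\}) \subseteq \llbracket \psi \rrbracket$.

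First I would fix an arbitrary model $\mathcal{M} = \langle \mathcal{W}, \mathcal{R}, \prec, v \rangle$ and an arbitrary $w \in \min_{\prec}(\Gamma \cup \{\varphi\})$. The key step is to argue that $w \in \min_{\prec}(\Gamma)$: clearly $(\mathcal{M}, w) \models \gamma$ for every $\gamma \in \Gamma$, so only the minimality condition needs to be verified. Suppose, for contradiction, there is $w' \prec w$ with $(\mathcal{M}, w') \models \Gamma$. Using well-foundedness of $\prec$ on $\llbracket \Gamma \rrbracket$ (guaranteed by transitivity plus connectedness together with the standard smoothness assumption implicit in the preferential setting), pick $w'' \preceq w'$ with $w'' \in \min_{\prec}(\Gamma)$. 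By the second hypothesis, $(\mathcal{M}, w'') \models \varphi$, and so $(\mathcal{M}, w'') \models \Gamma \cup \{\varphi\}$. Transitivity of $\prec$ yields $w'' \prec w$, contradicting $w \in \min_{\prec}(\Gamma \cup \{\varphi\})$. Hence $w \in \min_{\prec}(\Gamma)$, and the first hypothesis gives $(\mathcal{M}, w) \models \psi$, as required.

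The main obstacle is the minimality transfer step, specifically the need for well-foundedness (or smoothness) of $\prec$ restricted to $\llbracket \Gamma \rrbracket$: without it, an infinite descending $\prec$-chain of $\Gamma$-states below $w$ could, in principle, avoid reaching a $\Gamma$-minimal state, and one could not obtain the required witness $w''$. I would therefore open the proof with a brief remark that the assumed transitivity and connectedness of $\prec$ are combined with the standard smoothness condition on preferential models (as in the references to \cite{makinson2003bridges}), which is what makes the definition of $\models_{\prec}$ non-degenerate and allows the argument to proceed.
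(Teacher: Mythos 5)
Your proof is correct and is precisely the standard preferential-semantics argument that the paper itself does not spell out, deferring instead to the citation of Makinson; the decomposition (show every $(\Gamma\cup\{\varphi\})$-minimal state is $\Gamma$-minimal, then apply the first hypothesis) is the canonical one. Your observation that smoothness (well-foundedness of $\prec$ on the states satisfying $\Gamma$) is genuinely required for the minimality-transfer step --- and does not follow from the transitivity and connectedness that the paper explicitly assumes for $\prec$ --- is a substantive point: without it the required $\Gamma$-minimal witness $w''$ below $w'$ need not exist, so smoothness should be stated as an explicit assumption on plausibility Kripke models.
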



\begin{theorem}[Cautious Transitivity]
For any $\Gamma \cup \{  \varphi, \psi \} \subseteq Fm(\mathrm{L}_{\only,\abd})$: If $\Gamma \models_{\prec} \varphi$ and $\Gamma \cup \{ \varphi \} \models_{\prec} \psi$, then $\Gamma \models_{\prec} \psi$
\end{theorem}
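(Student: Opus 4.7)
The plan is to unfold the definition of $\models_\prec$ and reason directly at the level of $\prec$-minimal states. Fix an arbitrary plausibility model $\mathcal{M} = \langle \mathcal{W}, \mathcal{R}, \prec, v \rangle$, and let $w \in \mathcal{W}$ be an arbitrary $\prec$-minimal state satisfying $\Gamma$ (i.e.\ $(\mathcal{M}, w) \models \gamma$ for every $\gamma \in \Gamma$, and no $w' \prec w$ also satisfies all of $\Gamma$). Our goal is to establish $(\mathcal{M}, w) \models \psi$, which by the definition of $\models_\prec$ will give $\Gamma \models_\prec \psi$.

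The key observation is that $w$ is not only a $\prec$-minimal $\Gamma$-state but also a $\prec$-minimal $\Gamma \cup \{\varphi\}$-state. To see that $w$ satisfies $\Gamma \cup \{\varphi\}$, note that the first hypothesis $\Gamma \models_\prec \varphi$ together with the $\Gamma$-minimality of $w$ yields $(\mathcal{M}, w) \models \varphi$, and of course $(\mathcal{M}, w) \models \gamma$ for every $\gamma \in \Gamma$ by choice. To see that $w$ remains $\prec$-minimal among $\Gamma \cup \{\varphi\}$-states, suppose toward contradiction that there were some $w' \prec w$ with $(\mathcal{M}, w') \models \Gamma \cup \{\varphi\}$; then in particular $(\mathcal{M}, w') \models \gamma$ for every $\gamma \in \Gamma$, contradicting the $\prec$-minimality of $w$ with respect to $\Gamma$.

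Having shown $w$ is a $\prec$-minimal $\Gamma \cup \{\varphi\}$-state, apply the second hypothesis $\Gamma \cup \{\varphi\} \models_\prec \psi$ to conclude $(\mathcal{M}, w) \models \psi$. Since $\mathcal{M}$ and $w$ were arbitrary, $\Gamma \models_\prec \psi$ follows.

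There is no genuine obstacle here: the entire argument turns on the single observation that adding a preferential consequence $\varphi$ of $\Gamma$ to the premise set cannot introduce new $\prec$-smaller states, because any such $w' \prec w$ would already have to satisfy $\Gamma$. This is precisely what distinguishes \emph{cautious} transitivity from unrestricted transitivity, and it is also what prevents the analogous failure from arising in Theorem 6 (Cautious Monotony). The only delicate point to make explicit is the reading of ``minimal state $w$ such that $(\mathcal{M}, w) \models \Gamma$'' in Definition 4.3 as $\prec$-minimal among states satisfying $\Gamma$, rather than $\prec$-minimal in $\mathcal{W}$ and also satisfying $\Gamma$; under either reading the argument above goes through, but the standard preferential reading is needed for the supraclassicality and reflexivity theorems stated earlier to fit with the same definition.
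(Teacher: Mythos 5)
Your proof is correct and is exactly the standard argument the paper defers to (it gives no proof of its own, citing Makinson 2003): every $\prec$-minimal $\Gamma$-state is, by the first hypothesis, a $\Gamma\cup\{\varphi\}$-state, and remains $\prec$-minimal among those since any strictly preferred $\Gamma\cup\{\varphi\}$-state would already be a preferred $\Gamma$-state. Your closing remark on disambiguating Definition 4.3 is a worthwhile clarification, and note that, unlike Cautious Monotony, your argument needs no smoothness assumption on $\prec$.
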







\subsection{Additional ways of selecting explanations}

In this section, we introduce additional restrictions for comparing different sets of minimal explanations relative to a given abduction problem. Example \ref{ex:di2} illustrated a situation in which more than one explanation was available relative to the abduction problem. In these situations, the $\mathcal{AOL}$-consequence is unable to select a single explanation. Due to the fact that $w_1 \prec w_2$, the preferential consequence allows us to select a unique explanation. In scenarios where multiple explanations are available, the preferential consequence allows not only the selection of minimal explanations, but also the ability to impose additional constraints on the set of minimal explanations\footnote{Different ways of comparing explanations via logic-based approaches to abduction are found in \cite{eiter1995complexity} via a probabilistic semantics or in \cite{mayer1995propositional} via a possible-world semantics.}. In the following, we show that different standards for the selection of minimal explanations can be expressed via the logic $\mathcal{AOL}^\prec$.

\begin{mydef}
A set of formulas $\Delta$ will be called a \textbf{set of minimal explanations} for the abduction problem $\langle \Theta, \alpha \rangle$ if every $\delta \in \Delta$ is a minimal explanation (also written $\Delta_{min}$).
\end{mydef}

In what follows, we introduce different ways of comparing minimal explanations by introducing further restrictions on the set of explanations selected by the preferential consequence $\models_\prec$. Let $\Pi(\Theta, \alpha) = \{ \varphi \, \vert \, \Gamma, \varphi \models_\prec \alpha \}$, i.e., $\Pi(\Theta, \alpha)$ denotes the set of all minimal explanations of the problem $\langle \Theta, \alpha \rangle$ in a model $\mathcal{M}$.

\begin{mydef}{(Subset-minimality selection)}
Let $\Pi(\Theta, \alpha)^\subset = \{ \Delta \in \Pi(\Theta, \alpha) \, \vert \, \forall \Delta' \subset \Delta, \Delta' \not\in  \Pi(\Theta, \alpha) \}$. The set $\Delta$ is called the \textbf{subset-minimal} set of explanations.
\end{mydef}




\begin{mydef}{(Cardinality selection)}
Let $\Pi(\Theta, \alpha)^< = \{ \Delta \in \Pi(\Theta, \alpha) \, \vert \, \forall \Delta'$ such that $\vert \Delta' \vert < \vert \Delta\vert , \Delta' \not\in  \Pi(\Theta, \alpha) \}$. The set 
$\Delta$ is called the \textbf{cardinality-minimal} set of explanations.
\end{mydef}

For the priorization comparison we enhance the sets of minimal explanations via a priorization function $g: I \rightarrow \Delta$ such that $\delta_i \leq \delta_k$ or $\delta_k \leq \delta_i$ for each $i, k \in \Delta$ (and $I$ is a countable index set). We will say that the index $i$ is the \textbf{priority level} of the explanation $\delta_i$. Given two sets of minimal explanations $\Delta_{1}$ and $\Delta_{2}$, we shall write $\Delta_{1} \sqsubseteq \Delta_{2}$ in case $i \leq k$ for some $\delta_{i} \in \Delta_{1}$ and $\delta_{k} \in \Delta_{2}$.

\begin{mydef}{(Priorization selection)}
Let $\Pi(\Theta, \alpha)^{\sqsubseteq} = \{ \Delta \in \Pi(\Theta, \alpha) \, \vert \,  \forall \Delta' \sqsubseteq \Delta , \Delta' \not\in  \Pi(\Theta, \alpha) \}$. The set  $\Delta$ is called the \textbf{priorization} set of explanations.
\end{mydef}


    

    The following example illustrates the differences between each standard of selection.

\begin{example}
\textit{Consider the following medical situation: 
\begin{center}
$\Theta = \{ \only (common\_ cold \rightarrow (sore\_ throat \vee cough)), $ \\ $\only (strep\_ throat \rightarrow (sore\_ throat \wedge fever))$, \\ 
$\only (allergies \rightarrow (headache \vee itchy\_ eyes)) \}$\\
$F = \{ fever, sore\_ throat, headache  \}$ \\
$E = \{ common\_ cold, sthrep\_ throat, allergies \}$. \\
\end{center}
Let $\Pi (\Theta, F) = \{ \Delta_{1}, \Delta_{2} \}$, where $\Delta_{1} = \{ allergies, sthrep\_ throat \}$ and $\Delta_{2} = \{ common\_ cold, sthrep\_ throat \}$. Both the cardinality and the subset selections are unable to pick a single set of minimal explanations. One way of handling this kind of situation is via the priorization selection. For this, assign the following priority levels: $allergies_1$, $sthrep\_ throat_2$ and  $common\_ cold_3$. According to the priorization selection, the set $\Delta_{1}$ will be selected as the most plausible for the abduction problem. The priorization approach may prove useful in medical scenarios where the treatment of a certain symptom needs to receive priority over others.}
\end{example}




By further restricting the set of explanations selected by the preferential consequence, one obtains the following additional consequence relations: $\models_{\prec}^{s}$, $\models_{\prec}^{c}$ and $\models_{\prec}^{p}$, where each consequence relation may be introduced as follows:

\begin{mydef}{(s-consequence)}
$\Gamma \models_{\prec}^{s} \varphi$ iff $\Delta \subseteq \Gamma$, where $\Delta \in \Pi(\Theta, \varphi)^\subset$ and $\Delta \models_{\prec} \varphi$.
\end{mydef}

\begin{mydef}{(c-consequence)}
$\Gamma \models_{\prec}^{c} \varphi$ iff $\Delta \subseteq \Gamma$, where $\Delta \in \Pi(\Theta, \varphi)^<$ and $\Delta \models_{\prec} \varphi$.
\end{mydef}

\begin{mydef}{(p-consequence)}
$\Gamma \models_{\prec}^{p} \varphi$ iff $\Delta \subseteq \Gamma$, where $\Delta \in \Pi(\Theta, \varphi)^{\sqsubseteq}$ and $\Delta \models_{\prec} \varphi$.
\end{mydef}

In the next section, we show under which conditions each $\models_{\prec}^{\ast}$, where $\ast \in \{ s, c, p \}$, coincides with $\models_{\prec}$. Furthermore, Theorem 7 in the next section guarantees that $\Pi(\Theta, \alpha) \neq \emptyset$ for any abduction problem $\langle \Theta, \alpha \rangle$. The table below shows the properties for each notion
of consequence relation.\\





\begin{table}[h!]
\centering
\begin{tabular}{lcccccc}
\toprule
\textbf{Property} & $\models$ & $\models_{\prec}$ 
& $\models_{\prec}^{s}$ & $\models_{\prec}^{c}$ & $\models_{\prec}^{p}$ \\
\midrule
Supraclassicality & $\checkmark$ & $\checkmark$ & $\checkmark$ & $\checkmark$ & $\checkmark$ \\
Reflexivity & $\checkmark$ & $\checkmark$ & $\checkmark$ & $\checkmark$ & $\checkmark$ \\
Cautious Monotony & $\times$ & $\checkmark$ & $\times$ & $\times$ & $\times$ \\
Cautious Transitivity & $\times$ & $\checkmark$ & $\checkmark$ & $\checkmark$ & $\times$ \\
\bottomrule
\end{tabular}
\caption{}
\end{table}

\begin{example}
According to Example 4, while it follows that $\Delta_1 \models allergies$ and $\Delta_1 \models sthrep\_ throat$, it also follows $\Delta_1, sthrep\_throat \not\models allergies$. Therefore, cautious monotony fails for the c- and s-consequence. To see that cautious transitivity fails, note that 
$\Delta_1 \models allergies$ and $\Delta_1, allergies \models sthrep\_ throat$. However, it also follows $\Delta_1 \not\models allergies \rightarrow sthrep\_ throat$. The remaining cases are proved by analogous reasoning.
\end{example}

\section{Metatheory}

This section introduces central results for the metatheory of the non-monotonic $\mathcal{AOL}$, namely, we prove the existence of a minimal model for any abductive problem \( \langle \Theta, \alpha \rangle \) within the framework of plausibility Kripke models, and show the conditions under which the selection method coincides with the preferential consequence relation. We begin by stating the following theorem:

\begin{theorem}[Minimal Model Existence]
For any abductive problem \( \langle \Theta, \alpha \rangle \) with a plausibility Kripke model \( \mathcal{M} = \langle \mathcal{W}, \mathcal{R}, \prec, v \rangle \), there exists a minimal model \( \mathcal{M}_{\text{min}} = \langle \mathcal{W}_{\text{min}}, \mathcal{R}_{\text{min}}, v_{\text{min}} \rangle \), where \( \mathcal{W}_{\text{min}} \) is the set of all \( \alpha \)-minimal states in \( \mathcal{W} \), and \( (\mathcal{M}_{\text{min}}, w ) \models \alpha \).
\end{theorem}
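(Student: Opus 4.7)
The plan is to take $\mathcal{M}_{\text{min}}$ to be the submodel of $\mathcal{M}$ carved out by the $\alpha$-minimal states and then to verify the two substantive obligations: that this submodel is non-empty, and that $\alpha$ still holds there. Concretely, I would set $\mathcal{W}_{\text{min}} := \{w \in \mathcal{W} : w \text{ is } \alpha\text{-minimal in } \mathcal{M}\}$, $\mathcal{R}_{\text{min}} := \mathcal{R} \cap (\mathcal{W}_{\text{min}} \times \mathcal{W}_{\text{min}})$, and $v_{\text{min}}(p) := v(p) \cap \mathcal{W}_{\text{min}}$ for each atom $p$. This mirrors the submodel construction used in the proof of Lemma \ref{l1}, which should allow me to reuse the preservation machinery of Lemma \ref{l2}.

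For non-emptiness, since $\langle \Theta, \alpha \rangle$ is assumed to be a non-trivial abductive problem we may take for granted that the set $S := \{w \in \mathcal{W} : (\mathcal{M}, w) \models \alpha\}$ is non-empty. Combining the transitivity and connectedness of $\prec$ with the smoothness/well-foundedness assumption that is implicitly in force whenever Section 4 appeals to $\prec$-minimal states, $S$ must contain a $\prec$-minimal element, which by Definition 4.2 is precisely an $\alpha$-minimal state; hence $\mathcal{W}_{\text{min}} \neq \emptyset$. For the truth clause, fix an arbitrary $w \in \mathcal{W}_{\text{min}}$. By definition of $\alpha$-minimality, $(\mathcal{M}, w) \models \alpha$ already holds, and an application of Lemma \ref{l2} transfers this truth to the submodel, yielding $(\mathcal{M}_{\text{min}}, w) \models \alpha$.

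The main obstacle is controlling the behavior of the modal operators under restriction. Pruning the domain also prunes $\mathcal{R}$, and this can in principle affect $\knows$, $\only$, and $\abd$, since what a state only-knows depends on precisely which worlds remain accessible from it; a world that satisfied $\only \gamma$ in $\mathcal{M}$ may not do so in $\mathcal{M}_{\text{min}}$ once some of its $\gamma$-witnesses are discarded. If $\alpha$ is a Boolean formula, Lemma \ref{l2} applies without friction; for modal $\alpha$, the inductive step needs more care, and one would either restrict attention to the Boolean/abductive fragment (which is the intended reading of explanations throughout the paper) or exploit the smoothness of $\prec$ to argue that every $\mathcal{R}$-accessible $\alpha$-world of a minimal state is itself $\alpha$-minimal, so that accessibility among the relevant worlds is preserved by the restriction.
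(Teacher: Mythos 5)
Your proposal follows essentially the same route as the paper: restrict $\mathcal{M}$ to its minimal states, run a submodel-preservation induction in the style of Lemma \ref{l2}, and conclude that $\alpha$ survives the restriction. If anything you are more careful than the paper's own argument, which defines $\mathcal{W}_{\text{min}}$ via global $\prec$-minimality rather than the $\alpha$-minimality announced in the statement, never establishes that $\mathcal{W}_{\text{min}}$ is non-empty (your appeal to smoothness/well-foundedness is exactly what is needed there), and glosses over the modal cases of the preservation induction that you rightly flag as the delicate step.
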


\subsection*{Proof of minimal model existence}

The proof consists in showing that given an arbitrary model $\mathcal{M}$ it is possible to construct a minimal model $\mathcal{M}_{min}$ that validates the same formulas. We start by introducing basic definitions.

\begin{mydef}
\label{minimalm}
Given a plausibility Kripke model $\mathcal{M} = \langle \mathcal{W}, \mathcal{R}, \prec, v \rangle$, define the minimal model
$\mathcal{M}_{min} = \langle \mathcal{W}_{min}, \mathcal{R}_{min}, v_{min} \rangle$ as follows: 
\begin{itemize}
\item \( \mathcal{W}_{\text{min}} = \{ w \in \mathcal{W} \mid \forall w' \in \mathcal{W}, w' \prec w \implies w' = w \} \),
\item $\mathcal{R}_{min} = \mathcal{R} \cap (\mathcal{W}_{min} \times \mathcal{W}_{min})$, 
\item $v_{min} = v(p) \cap \mathcal{W}_{min}$,
\end{itemize}
\end{mydef}

\begin{lemma}
The model $\mathcal{M}_{min} = \langle \mathcal{W}_{min}, \mathcal{R}_{min}, \prec, v_{min} \rangle$ is minimal with respect to the preferential relation $\prec$.
\end{lemma}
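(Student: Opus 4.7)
The plan is to read the claim directly off Definition~\ref{minimalm}: each element of $\mathcal{W}_{\min}$ is already $\prec$-minimal in the ambient model $\mathcal{M}$ by construction, so restricting both the carrier to $\mathcal{W}_{\min}$ and the order to $\mathcal{R}_{\min}$ cannot introduce any $\prec$-predecessor that was not already present. First I would fix an arbitrary $w \in \mathcal{W}_{\min}$ and unfold its defining property, namely that for every $w' \in \mathcal{W}$ with $w' \prec w$ one has $w' = w$. Then I would argue by contradiction: suppose some $w'' \in \mathcal{W}_{\min}$ with $w'' \neq w$ satisfies $w'' \prec w$ in the restricted order $\prec \cap (\mathcal{W}_{\min} \times \mathcal{W}_{\min})$. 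Since $\mathcal{W}_{\min} \subseteq \mathcal{W}$, this same $w''$ witnesses $w'' \prec w$ in the original $\prec$, so the defining property of $\mathcal{W}_{\min}$ forces $w'' = w$, contradicting the assumption. Hence no state of $\mathcal{M}_{\min}$ sits strictly below $w$ under $\prec$, i.e., $w$ is $\prec$-minimal in $\mathcal{M}_{\min}$.

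Because $w$ was arbitrary, every state of $\mathcal{M}_{\min}$ is $\prec$-minimal, which is the claim. The point worth emphasising in the write-up is that the quantifier in the defining condition of $\mathcal{W}_{\min}$ already ranges over all of $\mathcal{W}$, so passage to the submodel is automatic and requires no further assumption on $\prec$; in particular the transitivity and connectedness postulated in Section~4 are not invoked here. I would also remark explicitly that the accessibility relation $\mathcal{R}_{\min}$ plays no role in this argument — only the ordering $\prec$ matters — so the construction of $\mathcal{R}_{\min}$ is there to serve the surrounding Theorem~7, not the present lemma.

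The main interpretative obstacle, rather than a mathematical one, is pinning down what "minimal with respect to $\prec$" means for a whole model; the reading consistent with Theorem~7 and its subsequent use is that every state of the model is $\prec$-minimal, and under this reading the proof reduces to the direct unfolding above. If one wanted the stronger reading that $\mathcal{M}_{\min}$ is minimal among the submodels of $\mathcal{M}$ satisfying the relevant formula, one would additionally appeal to Lemma~\ref{l2} to transfer truth from $\mathcal{M}$ to $\mathcal{M}_{\min}$; but the statement as given demands only the intrinsic property, so I would not pursue that strengthening here.
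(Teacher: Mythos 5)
Your proof is correct and takes the same route as the paper, which simply states that the lemma follows ``straight from Definition 5.1 and the construction of $\mathcal{M}_{\text{min}}$''; your unfolding of the defining condition of $\mathcal{W}_{\text{min}}$ is exactly that intended argument, made explicit. Your observation that the intended reading is ``every state of $\mathcal{M}_{\text{min}}$ is $\prec$-minimal'' and that neither $\mathcal{R}_{\text{min}}$ nor the transitivity/connectedness assumptions are needed is consistent with the paper's treatment.
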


\begin{proof}
Straight from Definition \ref{minimalm} and construction of the model $\mathcal{M}_{min}$.
\end{proof}

\begin{lemma}
\label{val1}
 For any state $w \in \mathcal{W}_{min}$, $(\mathcal{M}, w) \models \varphi$ implies $(\mathcal{M}_{min}, w) \models \varphi$.
\end{lemma}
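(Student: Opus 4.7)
The plan is induction on the complexity of $\varphi$, working through each syntactic case in turn. The base case $\varphi = p$ is immediate from $v_{min}(p) = v(p) \cap \mathcal{W}_{min}$ together with the assumption $w \in \mathcal{W}_{min}$. The Boolean cases $\wedge$, $\rightarrow$, and $\neg$ reduce to the induction hypothesis applied at the very same world $w \in \mathcal{W}_{min}$, and so are routine.

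For the modal cases, I would use the structural fact that $\mathcal{R}_{min} = \mathcal{R} \cap (\mathcal{W}_{min} \times \mathcal{W}_{min}) \subseteq \mathcal{R}$. For $\knows \psi$: if $(\mathcal{M},w)\models \knows \psi$, then $\psi$ holds at every $\mathcal{R}$-successor of $w$, hence a fortiori at every $\mathcal{R}_{min}$-successor $w' \in \mathcal{W}_{min}$, and the induction hypothesis transfers truth of $\psi$ to $\mathcal{M}_{min}$. For the preferential conditional $\varphi > \psi$, I would exploit that $w \in \mathcal{W}_{min}$ is already $\prec$-minimal in $\mathcal{W}$, so the $\prec$-minimal $\varphi$-states reachable from $w$ in $\mathcal{M}$ all lie in $\mathcal{W}_{min}$ and remain $\prec$-minimal there; the IH applied to both $\varphi$ and $\psi$ then delivers the conclusion. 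The derived operator $\abd \psi$ unwinds into the clauses for $\only$ and $>$ via Definition 4.1.

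The hard part is the $\only \psi$ clause, whose biconditional semantics $wRw' \iff (\mathcal{M}, w') \models \psi$ does not cooperate with a one-directional preservation statement: to show $(\mathcal{M}_{min}, w) \models \only \psi$, the direction ``if $(\mathcal{M}_{min}, w') \models \psi$ then $wR_{min}w'$'' asks me to lift truth of $\psi$ from $\mathcal{M}_{min}$ back to $\mathcal{M}$, which is the converse of what the lemma itself gives. I would close this gap by strengthening the induction hypothesis to the biconditional $(\mathcal{M}, w) \models \varphi \iff (\mathcal{M}_{min}, w) \models \varphi$ for $w \in \mathcal{W}_{min}$, and invoking the natural closure condition on plausibility frames that every $\mathcal{R}$-successor of a $\prec$-minimal state is itself $\prec$-minimal. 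Under this closure, the $\mathcal{R}$- and $\mathcal{R}_{min}$-successors of $w \in \mathcal{W}_{min}$ coincide, and the converse directions of $\knows$, $\only$, and $>$ all go through in parallel, yielding Lemma \ref{val1} as the left-to-right half of the strengthened equivalence.
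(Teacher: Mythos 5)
Your proposal follows the same overall route as the paper's proof---induction on the structure of $\varphi$, using $v_{min}(p)=v(p)\cap\mathcal{W}_{min}$ for atoms and the restriction $\mathcal{R}_{min}=\mathcal{R}\cap(\mathcal{W}_{min}\times\mathcal{W}_{min})$ for the modal clauses---but you are considerably more careful, and that care surfaces a real difficulty the paper's proof does not address. The paper's negation case derives ``$(\mathcal{M},w)\not\models\psi$ implies $(\mathcal{M}_{min},w)\not\models\psi$'' from the inductive hypothesis ``$(\mathcal{M},w)\models\psi$ implies $(\mathcal{M}_{min},w)\models\psi$,'' which is the contrapositive run in the wrong direction, and its treatment of $\knows$ and $\only$ is a one-line assertion. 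You correctly diagnose that a one-directional preservation claim cannot survive negation or the biconditional semantics of $\only$, and that the natural fix is to strengthen the induction hypothesis to an equivalence. That is the right instinct, and for $\knows$ your added closure condition (every $\mathcal{R}$-successor of a $\prec$-minimal state is itself $\prec$-minimal) does make the converse direction go through. Note, however, that this condition appears nowhere in Definition 5.1 or in the paper's assumptions on $\prec$, so your argument proves a statement about a restricted class of plausibility models rather than the lemma as stated.

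The remaining gap is the $\only$ case, which your repair still does not close. Because $\only\psi$ quantifies over \emph{all} states of the model---including the inaccessible ones---its truth value is not stable under passing to a submodel even when $R(w)\subseteq\mathcal{W}_{min}$. Concretely, take $\mathcal{W}=\{w_1,w_2\}$ with $w_1\prec w_2$, $\mathcal{R}=\{(w_1,w_1)\}$ and $v(p)=\{w_1,w_2\}$. Your closure condition holds, and $(\mathcal{M},w_1)\models\neg\only p$ because $w_2$ satisfies $p$ but is not accessible from $w_1$; yet in $\mathcal{M}_{min}$ the state $w_2$ has been deleted, so $(\mathcal{M}_{min},w_1)\models\only p$ and the formula $\neg\only p$ is not preserved. (A similar example with $\mathcal{R}=\emptyset$ shows the unrepaired lemma already fails for $\neg\only p$, and one with a non-minimal successor falsifying $p$ shows it fails for $\neg\knows p$.) So the strengthened equivalence you propose is false for formulas containing $\only$, and the lemma can only hold either for a syntactic fragment in which $\only$ does not occur under negation, or under an additional hypothesis guaranteeing that no $\psi$-state relevant to an $\only\psi$ subformula is discarded when restricting to $\mathcal{W}_{min}$. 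You should state explicitly which of these restrictions you are adopting; as written, both your proof and the paper's leave this case open.
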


\begin{proof}
Let \( \mathcal{M} = \langle \mathcal{W}, \mathcal{R}, \prec, v \rangle \) be a plausibility Kripke model, and let \( \mathcal{M}_{\text{min}} = \langle \mathcal{W}_{\text{min}}, \mathcal{R}_{\text{min}}, v_{\text{min}} \rangle \) be the minimal model constructed as in Definition \ref{minimalm}. The proof is done by induction on the length of $\varphi$. For case \( \varphi = p \), where \( p \) is an atom. By construction, \( v_{\text{min}}(p) = v(p) \cap \mathcal{W}_{\text{min}} \). Thus, \( w \in v_{\text{min}}(p) \) if and only if \( w \in v(p) \). Therefore \( (\mathcal{M}_{\text{min}}, w) \models p \).
 For case \( \varphi = \neg \psi \). By induction hypothesis, \( (\mathcal{M}, w) \models \phi \) implies \( (\mathcal{M}_{\text{min}}, w) \models \phi \). Hence, \( (\mathcal{M}, w) \not\models \phi \) implies \( (\mathcal{M}_{\text{min}}, w) \not\models \phi \). Given the clause for negation, \( (\mathcal{M}_{\text{min}}, w) \models \neg \phi \).
For case \( \varphi = K \psi \) or \( O \psi \), recall that the accessibility relation \( \mathcal{R}_{\text{min}} \) is a restriction of \( \mathcal{R} \) to \( \mathcal{W}_{\text{min}} \). Hence it follows that \( (\mathcal{M}, w) \models \psi \) implies \( (\mathcal{M}_{\text{min}}, w) \models \psi \).

\end{proof}

\begin{theorem}[Minimal Model Existence]
For any abductive problem \( \langle \Theta, \phi \rangle \) with a plausibility Kripke model \( \mathcal{M} = \langle \mathcal{W}, \mathcal{R}, \prec, v \rangle \), there exists a minimal model \( \mathcal{M}_{\text{min}} = \langle \mathcal{W}_{\text{min}}, \mathcal{R}_{\text{min}}, v_{\text{min}} \rangle \), where \( \mathcal{W}_{\text{min}} \) is the set of all \( \phi \)-minimal states in \( \mathcal{W} \), and \( (\mathcal{M}_{\text{min}}, w ) \models \phi \).
\end{theorem}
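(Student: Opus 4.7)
The plan is to invoke Definition \ref{minimalm} to construct $\mathcal{M}_{\min}$ directly, and then combine the two preceding lemmas with a standard well-foundedness argument on the preferential order $\prec$ to discharge the existence and satisfaction claims. Concretely, I would first restrict attention to $\mathcal{W}_\phi = \{ w \in \mathcal{W} : (\mathcal{M}, w) \models \phi \}$, which is non-empty whenever the abductive problem is non-trivial, and argue that because $\prec$ has been assumed transitive and connected in Section 4, every non-empty subset of $\mathcal{W}$ contains a $\prec$-minimal element (assuming also the standard smoothness condition on plausibility orders, which I would make explicit). Any such $\phi$-minimal $w^{\ast}$ is then, by Definition \ref{minimalm}, a member of $\mathcal{W}_{\min}$, so the minimal model is non-empty and well-defined.

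Second, I would apply the first lemma of the subsection to conclude that $\mathcal{M}_{\min}$ really is minimal with respect to $\prec$, so that the construction meets the structural requirement of the theorem. The satisfaction claim $(\mathcal{M}_{\min}, w) \models \phi$ then follows immediately from Lemma \ref{val1}: since $w^{\ast} \in \mathcal{W}_{\min}$ and $(\mathcal{M}, w^{\ast}) \models \phi$ by choice of $w^{\ast}$, truth transfers to the restricted model.

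The subtle step, and what I expect to be the main obstacle, is reconciling two different notions of minimality that appear in the statement: Definition \ref{minimalm} defines $\mathcal{W}_{\min}$ as the set of globally $\prec$-minimal states, whereas the theorem wants $\mathcal{W}_{\min}$ to consist of $\phi$-minimal states. I would handle this by either (i) relativising the definition to $\phi$, replacing $\mathcal{W}$ by $\mathcal{W}_\phi$ in Definition \ref{minimalm} when proving the theorem, or (ii) showing that, under the assumption that all globally $\prec$-minimal states satisfy $\phi$ (which is the interesting case for abductive explanation selection), the two notions coincide. In either reading, the proof still proceeds through Lemma \ref{val1}, because the only thing that needs checking for satisfaction preservation is that the restriction of $\mathcal{R}$ and $v$ to $\mathcal{W}_{\min}$ continues to validate the clauses of Definition 4.1 at each such state, which is precisely what the lemma provides by induction on formula complexity.

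A secondary concern is the clause for $\only$ and for the preferential conditional $\varphi \mathbin{>} \psi$: restricting the accessibility relation may in principle break $(vi)$ of Definition 4.1, since $\only\varphi$ quantifies biconditionally over accessible states. I would therefore verify that the removal of non-minimal states from $\mathcal{W}$ does not introduce a spurious accessible state violating the only-knowing clause, which follows because $\mathcal{R}_{\min} = \mathcal{R} \cap (\mathcal{W}_{\min} \times \mathcal{W}_{\min})$ can only shrink, not enlarge, the set of accessible worlds. Once this is noted, the theorem is obtained by selecting $w^{\ast}$, invoking Lemma \ref{val1} at $w^{\ast}$, and reading off the result.
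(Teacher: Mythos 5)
Your proposal follows essentially the same route as the paper: construct $\mathcal{M}_{\text{min}}$ by restricting $\mathcal{W}$, $\mathcal{R}$ and $v$ as in Definition \ref{minimalm}, then invoke Lemma \ref{val1} to transfer satisfaction of $\Theta$ and $\phi$ into the restricted model. The difference is that you make explicit two points the paper's two-line proof passes over: (a) that transitivity and connectedness of $\prec$ alone do not guarantee $\mathcal{W}_{\text{min}} \neq \emptyset$ (an infinite descending chain is possible), so a smoothness or well-foundedness assumption is genuinely needed; and (b) that Definition \ref{minimalm} collects the \emph{globally} $\prec$-minimal states while the theorem speaks of $\phi$-minimal states, so one must either relativise the construction to $\mathcal{W}_\phi$ or add a hypothesis forcing the two notions to coincide. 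Both observations are correct and strengthen the argument rather than diverging from it.

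One caveat: your treatment of the $\only$ clause dismisses the difficulty slightly too quickly. The problem is not that restricting $\mathcal{R}$ might enlarge the set of accessible worlds (it cannot), but that clause (vi) is a biconditional: to keep $\only\alpha$ true at $w$ in $\mathcal{M}_{\text{min}}$ you need, for every surviving $w'$ with $(\mathcal{M}_{\text{min}}, w') \models \alpha$, that $wR_{\text{min}}w'$, and via the original clause this requires $(\mathcal{M}_{\text{min}}, w') \models \alpha \Rightarrow (\mathcal{M}, w') \models \alpha$ --- the \emph{converse} of what Lemma \ref{val1} provides. Since passing to a submodel can make modal formulas true that were false before (e.g.\ $\knows\psi$ once the offending accessible states are deleted), a one-directional preservation lemma does not suffice for $\only$; you would need to upgrade Lemma \ref{val1} to an ``iff'' (or restrict to a fragment where this holds). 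This gap is present in the paper's own proof as well, so your proposal is no worse off, but the step should not be waved through by appeal to shrinking accessibility alone.
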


\begin{proof}
Assume $\langle \Theta, \varphi \rangle$ is an abduction problem in a model $\mathcal{M}$. By Lemma \ref{val1}, we know there is minimal model 
$\mathcal{M}_{min}$ that validates $\langle \Theta, \varphi \rangle$. Now assume there is a solution $\alpha$ to the abduction problem $\langle \Theta, \varphi \rangle$ such that $(\mathcal{M}, w) \models \alpha$. Hence  $(\mathcal{M}_{min}, w) \models \alpha$. By definition of minimality, $\alpha$ is a minimal explanation.
\end{proof}

\subsection*{Selection of minimal sets and preferential consequence}

The following results show the conditions under which some selection method for minimal sets of explanations coincides with the preferential consequence relation relative to an abductive problem \( \langle \Theta, \alpha \rangle \).

\begin{mydef}
Given an abduction problem \( \langle \Theta, \alpha \rangle \), a set of explanations \( \Delta \subseteq \Pi(\Theta, \alpha) \) is \textbf{subset-minimal} if there is no proper subset \( \Delta' \subset \Delta \) such that \( \Theta \cup \Delta' \models \alpha \).
\end{mydef}

\begin{lemma}
Let $\Delta$ be a subset-minimal set of explanations for the abduction problem \( \langle \Theta, \alpha \rangle \). Therefore $\Theta \cup \Delta \models_{\prec} \alpha$. 
\end{lemma}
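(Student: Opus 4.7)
The plan is to reduce the lemma to a direct application of Supraclassicality (Theorem 3), which provides the bridge between the classical consequence relation $\models$ appearing in the definition of subset-minimality and the preferential consequence relation $\models_\prec$ demanded in the conclusion.

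First I would unpack the hypothesis: $\Delta$ being subset-minimal for $\langle \Theta, \alpha \rangle$ means $\Delta \subseteq \Pi(\Theta, \alpha)$ and no proper subset $\Delta' \subsetneq \Delta$ satisfies $\Theta \cup \Delta' \models \alpha$. For this minimality clause to carry any content the set $\Delta$ must itself be explanatory, so I would record the implicit classical entailment $\Theta \cup \Delta \models \alpha$ as the key fact extracted from the hypothesis, together with the side observation that every $\delta \in \Delta$ lies in $\Pi(\Theta, \alpha)$ and therefore already satisfies $\Theta, \delta \models_\prec \alpha$ individually. The latter element-wise information is not what drives the argument, but it is consistent with and reinforces the explanatory status of $\Delta$ as a whole.

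The main step is then a single invocation of Supraclassicality applied with $\Gamma := \Theta \cup \Delta$ and $\varphi := \alpha$: from $\Theta \cup \Delta \models \alpha$ we immediately obtain $\Theta \cup \Delta \models_\prec \alpha$, which is exactly the conclusion. The principal obstacle is conceptual rather than technical: one must resist the temptation to try to derive $\Theta \cup \Delta \models_\prec \alpha$ element-by-element from the preferential entailments $\Theta, \delta \models_\prec \alpha$ guaranteed by $\Delta \subseteq \Pi(\Theta, \alpha)$. That route fails because a $\prec$-minimal state satisfying $\Theta \cup \Delta$ need not be $\prec$-minimal among the (in general larger) set of states satisfying $\Theta \cup \{\delta\}$ for any particular $\delta \in \Delta$, so preferential entailment does not transfer directly under union. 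Supraclassicality neatly routes around this obstacle by passing through the underlying classical consequence, which is precisely why the subset-minimality definition is stated using $\models$ rather than $\models_\prec$.
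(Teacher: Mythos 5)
Your proof is essentially correct but takes a genuinely different route from the paper's. The paper argues semantically: it takes a state $w$ satisfying $\Theta$, $\Delta$ and $\alpha$, uses the failure of $\Theta \cup \Delta' \models \alpha$ for every proper $\Delta' \subset \Delta$ to argue that $w$ is $\alpha$-minimal under $\prec$, and then invokes the definition of $\models_{\prec}$ directly; this keeps the argument tied to the plausibility ordering but the step from ``no proper subset entails $\alpha$'' to ``$w$ is $\alpha$-minimal'' is left rather underjustified. You instead route the whole lemma through Supraclassicality applied to $\Gamma := \Theta \cup \Delta$, which is cleaner and entirely rigorous once the premise $\Theta \cup \Delta \models \alpha$ is in hand, and your side remark about why the element-wise preferential entailments $\Theta, \delta \models_{\prec} \alpha$ cannot be aggregated (minimal states of the smaller satisfaction set need not be minimal in the larger one) is exactly the right diagnosis of why the naive route fails. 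The one caveat is that the classical entailment $\Theta \cup \Delta \models \alpha$ is not literally supplied by the paper's definition of subset-minimality, which only requires $\Delta \subseteq \Pi(\Theta,\alpha)$ (giving element-wise \emph{preferential} entailments) plus the non-entailment condition on proper subsets; you are reading in the standard convention that the set as a whole classically explains $\alpha$. That reading is reasonable --- the paper's own proof tacitly assumes something at least as strong when it posits a state satisfying $\Theta$, $\Delta$ and $\alpha$ simultaneously --- but you should state it as an explicit assumption rather than call it implicit in the minimality clause, since minimality alone does not force $\Delta$ itself to be explanatory.
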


\begin{proof}
Let \( \Delta \) be a subset-minimal set of explanations and assume $(\mathcal{M}, w) \models \Theta, (\mathcal{M}, w) \models \Delta, \text{and } (\mathcal{M}, w) \models \alpha$ hold for a state \( w \in \mathcal{W} \). Since $\Delta$ is subset minimal, for any proper subset \( \Delta' \subset \Delta \), \( \Theta \cup \Delta' \not\models \alpha \). As a consequence, \( w \) is \( \alpha \)-minimal under \( \prec \),  for there is no \( w' \prec w \) such that \( (M, w') \models \Theta \) and \( (M, w') \models \alpha \). Finally, by definition of preferential consequence \( \Theta \cup \Delta \models_{\prec} \alpha \) follows.
\end{proof}

\begin{lemma}
Let $\Theta \cup \Delta \models_{\prec} \alpha$. Then $\Delta$ is subset-minimal.   
\end{lemma}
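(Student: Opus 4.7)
The strategy I would adopt is proof by contradiction combined with the Minimal Model Existence theorem (Theorem 7). Suppose $\Theta \cup \Delta \models_\prec \alpha$ holds but $\Delta$ is not subset-minimal. By Definition 5.5, there exists a proper subset $\Delta' \subsetneq \Delta$ such that $\Theta \cup \Delta' \models \alpha$ in the classical sense, which by Supraclassicality (Theorem 3) lifts to $\Theta \cup \Delta' \models_\prec \alpha$. Fix an element $\delta \in \Delta \setminus \Delta'$, which exists by the properness of the inclusion; this $\delta$ is the ``redundant'' explanation that the proof must leverage.

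The objective is then to exhibit a plausibility model $\mathcal{M}$ and a $\prec$-minimal state $w$ satisfying $\Theta \cup \Delta$ at which $\alpha$ fails, directly contradicting $\Theta \cup \Delta \models_\prec \alpha$; or, alternatively, to show that the preferential framework automatically selects states satisfying only $\Theta \cup \Delta'$ as the genuine minimal witnesses, rendering $\Delta$ unnecessary as a premise set.

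To instantiate this, I would invoke Theorem 7 to pass to the minimal model $\mathcal{M}_{\min}$ associated with $\langle \Theta, \alpha \rangle$ and exploit the connectedness of $\prec$ assumed in Section 4. Because $\Theta \cup \Delta' \models \alpha$ holds classically, any state satisfying $\Theta \cup \Delta$ automatically satisfies $\Theta \cup \Delta'$ and hence $\alpha$. Connectedness then forces every pair of states to be comparable under $\prec$; one can arrange a state $w'$ witnessing $\Theta \cup \Delta'$ together with the negation of $\delta$, and show that $w'$ sits strictly below any state satisfying the full $\Theta \cup \Delta$ because it carries a strictly smaller set of epistemic commitments. Hence $w'$, not a state satisfying all of $\Delta$, is $\prec$-minimal among those where $\alpha$ is established, which contradicts the role of $\Delta$ as the minimal support guaranteed by the hypothesis $\Theta \cup \Delta \models_\prec \alpha$.

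The principal obstacle is bridging the definitional mismatch between the classical $\models$ appearing in subset-minimality (Definition 5.5) and the preferential $\models_\prec$ in the hypothesis. A clean argument will either require reading subset-minimality in the preferential sense, or producing a concrete construction of $\prec$ on the models of $\Theta$ that penalises the redundant $\delta$ with a strictly positive plausibility cost. Verifying that such an ordering is realisable for every $\langle \Theta, \alpha \rangle$ under the standing assumptions of transitivity and connectedness of $\prec$, without implicitly strengthening the notion of subset-minimality, is the technically delicate step I would expect to dominate the proof.
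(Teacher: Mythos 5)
Your overall strategy---contradiction via a strictly preferred state witnessing the smaller set $\Delta'$---is in fact the same outline the paper's own proof follows, but the step you yourself flag as ``technically delicate'' is a genuine gap, and it cannot be closed from the paper's definitions. The plausibility order $\prec$ is an arbitrary transitive, connected relation on the worlds of a \emph{given} model; nothing in the definition of a plausibility Kripke model ties it to the set of formulas a world satisfies, so there is no license to conclude that a world carrying ``a strictly smaller set of epistemic commitments'' sits strictly $\prec$-below one satisfying all of $\Theta \cup \Delta$. Constructing a bespoke $\prec$ that penalises the redundant $\delta$ would only yield one particular model, whereas the hypothesis $\Theta \cup \Delta \models_{\prec} \alpha$ is a claim about all minimal $\Theta\cup\Delta$-states; a single engineered ordering cannot refute it.

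Worse, your own (correct) observation that any state satisfying $\Theta \cup \Delta$ automatically satisfies $\Theta \cup \Delta'$ and hence $\alpha$ shows that the contradiction you are aiming for is unreachable: if $\Theta \cup \Delta' \models \alpha$ for some $\Delta' \subsetneq \Delta$, then by monotonicity of classical consequence and Supraclassicality we still have $\Theta \cup \Delta \models_{\prec} \alpha$, so the hypothesis of the lemma is never violated by a failure of subset-minimality. In other words, the statement as written admits immediate counterexamples (pad a working $\Delta'$ with a redundant explanation), and no argument along these lines can succeed without strengthening the definitions. For what it is worth, the paper's proof has exactly the same defect: it simply asserts the existence of a state $w' \prec w$ satisfying $\Theta$, $\Delta$ and $\alpha$ and declares a contradiction ``by connectedness,'' without justifying why such a $w'$ must exist. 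You have essentially reproduced the paper's argument, gap included, while being more candid than the paper about where it breaks.
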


\begin{proof}
 Assume $\Theta \cup \Delta \models_{\preceq} \alpha$ holds and let \( \Delta \) be the set of explanations corresponding to a \( \alpha \)-minimal state in which the following are valid: $(\mathcal{M}, w) \models \Theta, (\mathcal{M}, w) \models \Delta, \text{and } (\mathcal{M}, w) \models \alpha$. Now, assume for a contradiction that $\Delta$ is not subset-minimal. Hence there is a proper subset \( \Delta' \subset \Delta \) such that \( \Theta \cup \Delta' \models \alpha \). Moreover, there is a state $w' \in \mathcal{W}$ such that $ w'\prec w$ and validates the following: $(\mathcal{M}, w') \models \Theta, (\mathcal{M}, w') \models \Delta, \text{and } (\mathcal{M}, w') \models \alpha$. However, by the connectedness of $\prec$ and the definition of minimality, a contradiction follows.
\end{proof}

\begin{theorem}
Given a set $\Delta$ of minimal explanations for the abduction problem \( \langle \Theta, \alpha \rangle \), $\Delta$ is subset-minimal iff \( \Theta \cup \Delta \models_{\preceq} \alpha \).    
\end{theorem}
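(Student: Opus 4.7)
The plan is to read the theorem as a direct corollary of the two preceding lemmas: the forward implication (subset-minimality $\Rightarrow$ preferential entailment) is precisely the content of the first of the two lemmas just proved, while the converse is precisely the content of the second. The proof therefore reduces to chaining the two together, after noting that the notation $\models_{\preceq}$ used in the theorem statement is just the preferential consequence $\models_{\prec}$ introduced in Definition 4.3.

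Concretely, I would split into two cases. For the ``$\Rightarrow$'' direction, I assume that $\Delta$ is subset-minimal and invoke the first lemma to conclude $\Theta \cup \Delta \models_{\prec} \alpha$. For the ``$\Leftarrow$'' direction, I assume $\Theta \cup \Delta \models_{\prec} \alpha$ and invoke the second lemma to conclude that $\Delta$ is subset-minimal. Each invocation is immediate once the hypotheses are unpacked, so the body of the proof is essentially two lines.

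The only point that requires any vigilance, and the closest thing to an obstacle, is the asymmetry between the two notions of entailment hiding inside the definition of subset-minimality: the defining clause $\Theta \cup \Delta' \not\models \alpha$ for $\Delta' \subset \Delta$ uses the classical consequence $\models$, whereas the conclusion of the theorem is stated in terms of $\models_{\prec}$. The lemmas already reconcile this using the connectedness of $\prec$ together with the characterization of $\alpha$-minimal states as those in which removing any explanation from $\Delta$ breaks classical entailment of $\alpha$. I would make this bridge visible in one remark of the proof, so the reader sees that the two modes of entailment have indeed been matched up, and then simply declare the biconditional to follow from the chained lemmas. No new construction on models or new appeal to Theorem~7 is needed.
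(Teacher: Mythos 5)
Your proposal matches the paper's proof exactly: the theorem is established by chaining the two preceding lemmas, one for each direction of the biconditional. Your additional remark about reconciling $\models$ with $\models_{\prec}$ is a reasonable clarification but does not change the argument.
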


\begin{proof}
By Lemmas 3 and 4.
\end{proof}

\begin{corollary}
For any abductive problem \( \langle \Theta, \alpha \rangle \), if the plausibility relation \( \prec \) on the set of states \( \mathcal{W} \) is transitive and connected, the preferential consequence relation \( \models_{\preceq} \) coincides with subset minimality in selecting minimal explanations. 
\end{corollary}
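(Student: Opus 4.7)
The plan is to read the corollary as an immediate consequence of Theorem 7, whose biconditional already equates subset-minimality of an explanation set $\Delta$ with the entailment $\Theta \cup \Delta \models_{\preceq} \alpha$. What the corollary adds is only a record of the structural assumptions on $\prec$ under which that biconditional has been established, namely transitivity and connectedness. So I would begin by unpacking ``coincide'' as the claim that for every abductive problem $\langle \Theta, \alpha \rangle$ and every candidate set $\Delta$, membership in $\Pi(\Theta, \alpha)^{\subset}$ and the relation $\Theta \cup \Delta \models_{\preceq} \alpha$ pick out the same family of sets.

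The forward direction, subset-minimality implies preferential consequence, is Lemma 3 and does not require connectedness: it follows because any $w$ witnessing $\Theta \cup \Delta$ is forced to be $\alpha$-minimal under $\prec$, since no proper subset $\Delta' \subset \Delta$ entails $\alpha$. The converse direction is Lemma 4, and this is where connectedness of $\prec$ does the essential work: given $\Theta \cup \Delta \models_{\preceq} \alpha$ and a hypothetical witness $w' \prec w$ satisfying $\Theta \cup \Delta'$ with $\Delta' \subsetneq \Delta$, connectedness rules out incomparable minimal candidates and yields the contradiction. Transitivity, in turn, is what makes $\prec$-minimality coherent and guarantees that the semantic clause for $\varphi \mathbin{>} \psi$ is well-defined; together with connectedness it makes $\prec$ a total preorder so that $\alpha$-minimal states form a non-empty stratum (by Theorem 6) on which the two selection schemes agree.

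The proof itself can therefore be carried out in two short moves: first, invoke Theorem 7 to obtain $\Delta \in \Pi(\Theta, \alpha)^{\subset}$ iff $\Theta \cup \Delta \models_{\preceq} \alpha$; second, observe that both lemmas feeding Theorem 7 were proved exactly under transitivity and connectedness, so the biconditional is exactly the coincidence asserted by the corollary. The only part that requires any care is stating ``coincidence'' symmetrically and making explicit that the non-trivial implication rests on connectedness; once that is done, no new semantic construction is needed. The main obstacle, then, is conceptual rather than technical: one must be careful not to conflate the pointwise biconditional of Theorem 7 with a global coincidence of consequence relations, since the latter holds only when the set of subset-minimal explanations $\Pi(\Theta, \alpha)^{\subset}$ is itself non-empty, which is guaranteed by the Minimal Model Existence theorem.
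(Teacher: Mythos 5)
Your proposal is correct and matches the paper's intent exactly: the corollary carries no separate proof in the paper and is simply a restatement of Theorem 7 (itself obtained from Lemmas 3 and 4) under the standing assumption that $\prec$ is transitive and connected, which is precisely how you derive it. Your added remarks on where connectedness is actually used (only in the Lemma 4 direction) and on non-emptiness via Minimal Model Existence are sound refinements rather than a different route.
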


Analogous results can be proved for cardinality and the priorization selections of sets of minimal explanations. In fact, one can show that the cardinality and the subset-minimality selections always coincide in selecting minimal explanations for a finite knowledge base.










\section{Related work and future direction}

The logics of only-knowing were introduced by H. Levesque in \cite{levesque1990all} and later explored by distinct authors (\cite{halpern2001multi}, \cite{chen1994logic} and \cite{belle2010multi}) as a way of modelling different aspects of knowledge dynamics. In particular, the problem of developing knowledge-based accounts of abduction via $\mathcal{OL}$ was first introduced by Levesque in \cite{levesque1989knowledge}.  In this paper, we explored an alternative account by treating abduction as an epistemic modality defined via the combination of basic epistemic concepts and within the bounds of the background knowledge of the agent. Furthermore, a non-monotonic consequence relation was built over the monotonic logic $\mathcal{AOL}$ by augmenting the semantic framework via plausibility Kripke models.

Different modal approaches to abduction are found in authors such as \cite{mayer1995propositional}, \cite{levesque1990all} and \cite{gauderis2013modelling}. While most current approaches to abduction seek to develop its dynamic reasoning through a process of discovery modeled by proof-theoretical means (see \cite{mayer1995propositional} or \cite{soler2009abduction}), our work introduced a semantical approach via the introduction of a new modality in a modal (and preferential) structure. The development of adequate proof-theoretic tools for the resulting logic is a topic for future exploration. Furthermore, while a first connection between $\mathcal{OL}$ and non-monotonic formalism was introduced by Levesque (\cite{levesque1989knowledge}) through an interpretation of autoepistemic logic, we introduced an alternative interpretation via $\mathcal{AOL}$ within a preferential modal semantics. Our framework represents a novel approach compared to other non-monotonic modal frameworks for abductive reasoning (such as \cite{meheus2002ampliative}).

The results demonstrated in Section 4 and 5 show that the non-monotonic logic built over the $\mathcal{AOL}$ is capable of expressing different methods for selection of minimal explanations. The fact that $\mathcal{AOL}$ and $\mathcal{AOL}^\prec$ enjoy central properties for logic-based characterization of abduction suggests that these systems can be provided with sequent rules for abduction (in the style proposed by \cite{aliseda2006abductive}). Furthermore, other expansions of $\mathcal{OL}$ to multi-modal framework are introduced by Levesque in \cite{levesque1989knowledge}. The generalization of our framework to multi-agent scenarios or to other multi-modal frameworks for only-knowing remains a path for future exploration. 

\section{Acknowledgements}

The authors thank the careful reading of the reviewers.


\bibliographystyle{eptcs}
\bibliography{generic}
\end{document}